 \documentclass[letterpaper, 10 pt, journal,twoside,web]{ieeecolor}
 \usepackage{lcsys}

\usepackage{geometry}
\usepackage{rotating}
\usepackage{chngcntr}
\usepackage{apptools}
\usepackage{listings}
\AtAppendix{\counterwithin{thm}{section}}
\usepackage{graphicx}
\usepackage{graphics}
\usepackage{amssymb}
\usepackage{amsmath}

\usepackage{amsthm}

\usepackage{color}
\usepackage{xspace}
\usepackage{amssymb}
\usepackage{amsfonts}
\usepackage{algpseudocode}
\usepackage{bbm}
\usepackage{comment}
\usepackage{algorithmicx}
\usepackage{subfig}
\usepackage{psfrag}
\usepackage{cite}
\usepackage{algorithm,algorithmicx}
\usepackage[skip=10pt]{subcaption}

\usepackage{xcolor}

\newcommand{\vect}[1]{\boldsymbol{\mathbf{#1}}}

 \newcommand{\boxend}{\hfill \ensuremath{\Box}}

\theoremstyle{definition}

\theoremstyle{remark}
\newtheorem{remark}{Remark}
\newtheorem{thm}{Theorem}[section]

\newtheorem{rem}{Remark}[section]

\newtheorem{lem}{Lemma}[section]

\newtheorem{assump}{Assumption}

\newcommand{\oprocendsymbol}{\hbox{$\bullet$}}
\newcommand{\oprocend}{\relax\ifmmode\else\unskip\hfill\fi\oprocendsymbol}

\makeatletter
\def\@opargbegintheorem#1#2#3{%
  \trivlist
  \item[\hskip\labelsep{\bfseries #1\ #2\ (#3):}]\itshape
}
\makeatother

\begin{document}

\title{ 
\huge{{\huge 
Adaptive Threshold-Driven Continuous Greedy Method for Scalable Submodular Optimization
}}
}

\author{Mohammadreza Rostami, Solmaz S. Kia, \emph{Senior member, IEEE} %
  \thanks{The authors are with the Department of Mechanical and Aerospace Engineering, University of California Irvine, Irvine, CA 92697,  
    {\tt\small \{mrostam2,solmaz\}@uci.edu.} This work was supported by NSF Award ECCS 2452149.}%
}
 
\markboth{}%
{}
\pagenumbering{empty}

\allowdisplaybreaks

\maketitle

\begin{abstract}
Submodular maximization under matroid constraints is a 
fundamental problem in combinatorial optimization with 
applications in sensing, data summarization, active learning, 
and resource allocation. While the Sequential Greedy (SG) 
algorithm achieves only a $\frac{1}{2}$-approximation due to 
irrevocable selections, Continuous Greedy (CG) attains the 
optimal $\bigl(1-\frac{1}{e}\bigr)$-approximation via the 
multilinear relaxation, at the cost of a progressively dense 
decision vector that forces agents to exchange feature 
embeddings for nearly every ground-set element. We propose 
\textit{ATCG} (\underline{A}daptive \underline{T}hresholded 
\underline{C}ontinuous \underline{G}reedy), which gates 
gradient evaluations behind a per-partition progress ratio 
$\eta_i$, expanding each agent's active set only when current 
candidates fail to capture sufficient marginal gain, thereby 
directly bounding which feature embeddings are ever transmitted. 
Theoretical analysis establishes a curvature-aware approximation 
guarantee with effective factor 
$\tau_{\mathrm{eff}}=\max\{\tau,1-c\}$, interpolating between the 
threshold-based guarantee and the low-curvature regime where 
\textit{ATCG} recovers the performance of CG. This shows that the 
problem structure, as captured by curvature, determines the amount of 
coordination and communication required to approach full-CG performance. 
Experiments on a class-balanced prototype selection problem over a subset 
of the CIFAR-10 animal dataset show that \textit{ATCG} achieves objective 
values comparable to those of the full CG method while substantially 
reducing communication overhead through adaptive active-set expansion.
\end{abstract}

\begin{IEEEkeywords}
Continuous greedy algorithm, Thresholding method, Submodular maximization
\end{IEEEkeywords}

\section{Introduction}

Submodular maximization has emerged as a cornerstone of modern discrete 
and combinatorial optimization, with relevance to a wide range of applications 
in machine learning, control, and networked decision-making~\cite{krause2014submodular, 
nemhauser1978analysis, calinescu2011maximizing, bach2013learning, rostami2023federated, rostami2024fedscalar, rostami2025fedmpdd}. 
Submodular functions naturally model notions of diversity, 
representativeness, and information coverage, and thus appear in problems 
such as sensor placement~\cite{krause2008near}, data 
summarization~\cite{lin2011class}, influence maximization~\cite{kempe2003maximizing}, 
and active learning~\cite{golovin2011adaptive}.

\begin{figure}[t]
\centering
    \includegraphics[scale=0.23]{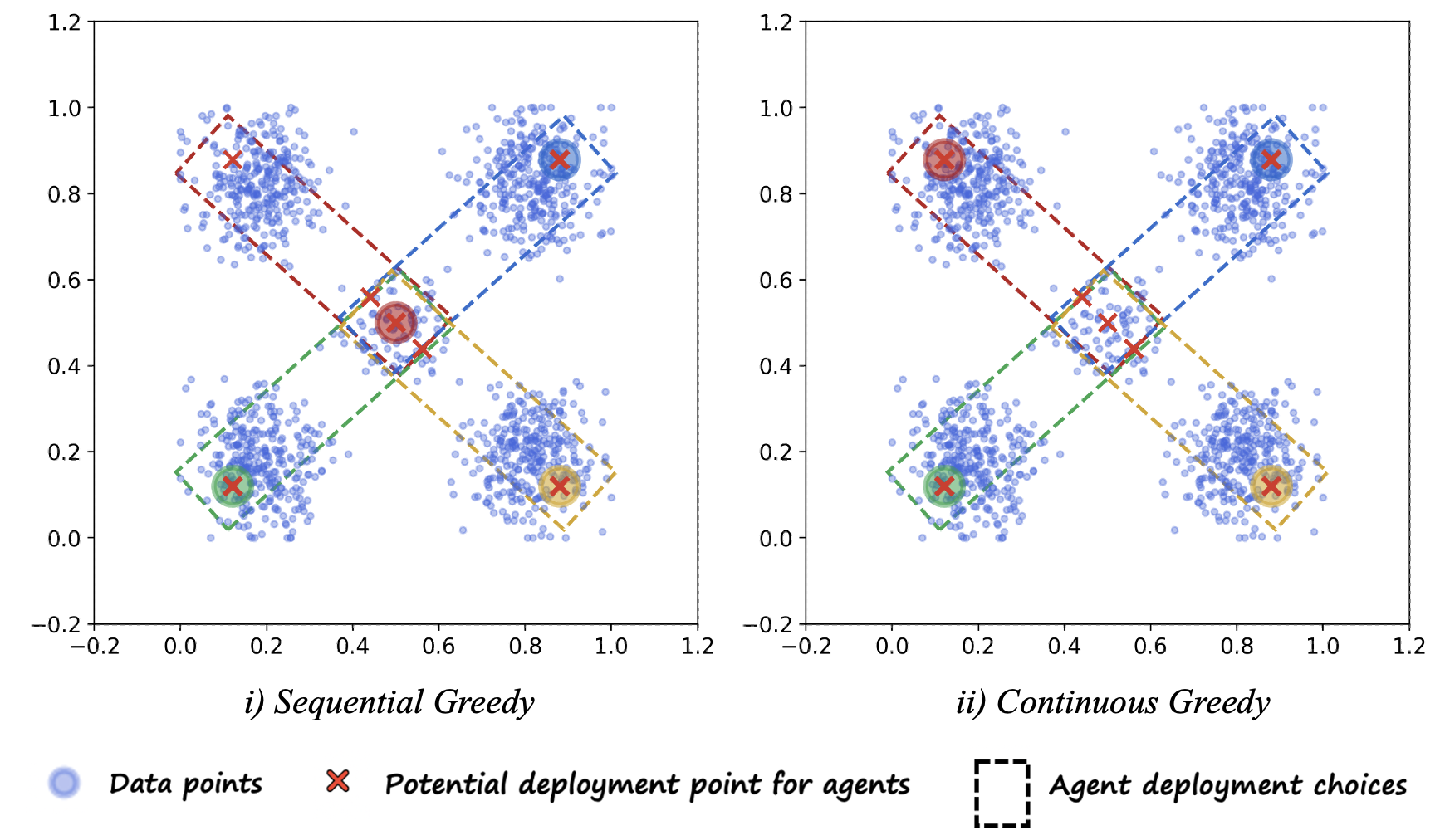}
\caption{{\small 
Illustration of the early-commitment effect of SG 
(left) and its mitigation via CG (right) in a 2D sensor placement 
task. Deployment points (solid circles) are selected from candidates 
(red~$\times$) to cover a clustered data distribution (blue dots) 
under partition-matroid constraints, where each colored region 
represents the candidate set of the corresponding agent.}}
    \label{fig:forward_gradiant}
\end{figure}

A central problem in this area is constrained submodular maximization 
under a \emph{matroid constraint}, which generalizes independence 
structures such as cardinality, partition, or budget 
limits~\cite{calinescu2011maximizing}. The canonical formulation with 
utility function $f:2^\mathcal{P}\to\mathbb{R}_{+}$ is
\begin{equation}\label{eq::canonical_prob}
\max_{\mathcal{S} \in \mathcal{I}} f(\mathcal{S}),
\end{equation}
where $\mathcal{I}$ denotes the independent sets of a matroid on ground 
set $\mathcal{P}$. This problem is NP-hard. 

The celebrated \emph{Sequential Greedy (SG)} algorithm~\cite{nemhauser1978analysis, 
fisher2009analysis} offers a classical polynomial-time solution by 
iteratively selecting the element with the largest marginal gain in 
$f(\mathcal{S})$, starting from the empty set. Despite its simplicity 
and scalability, SG achieves only a $(1/2)$-approximation for monotone 
submodular functions under general matroid constraints. This limitation 
arises from the inherently \emph{irreversible} nature of greedy 
selections, i.e., once an element is chosen, the decision cannot be revised, 
which can lead to suboptimal configurations when early selections 
restrict future possibilities. The \emph{Continuous Greedy (CG)} 
algorithm~\cite{calinescu2011maximizing} closes this gap by optimizing 
over the \emph{multilinear extension} of $f$,
\begin{align}\label{eq::multi_linear}
F(\vect{x}) =& \mathbb{E}[f(\mathcal{R}(\vect{x}))]=\sum_{\mathcal{R} \subset \mathcal{P}}{}\! \!f(\mathcal{R})\! \prod_{p \in \mathcal{R}}^{}\!\! [\vect{x}]_p \prod_{p \not\in \mathcal{R}}^{}\!(1-[\vect{x}]_p),
\end{align}
where $\vect{x}\in[0,1]^{|\mathcal{P}|}$ is the \emph{probability membership vector}, and $\mathcal{R}(\mathbf{x})$ includes each 
element $p$ independently with probability $[\vect{x}]_p\in[0,1]$. Instead of committing 
to discrete selections, CG maintains a fractional vector $\vect{x}$ that 
evolves continuously within the matroid polytope, at each step following 
a direction that maximizes the inner product with the gradient 
$\nabla F(\vect{x})$ and gradually increasing the probability mass assigned to 
promising elements. By integrating these infinitesimal greedy steps, CG 
achieves the optimal $(1-1/e)$ approximation, with the final solution 
recovered via a lossless rounding step. 

The qualitative difference between SG and CG is illustrated 
in Fig.~\ref{fig:forward_gradiant}: SG's early commitment 
to a central median point fails to capture the four distinct 
data clusters, while CG's deferred, fractional allocation 
produces a superior arrangement. Tightening the optimality 
gap directly translates to cost savings and efficiency gains 
across domains such as sensor scheduling, logistics, and 
robotic coverage.

Despite its theoretical elegance and improved approximation guarantee, 
deploying CG in distributed or networked systems introduces significant 
computational and communication challenges. Each iteration requires agents 
evaluating or estimating the gradient $\nabla F(\vect{x})$, which depends on 
the contribution of all elements of the ground set $\mathcal{P}$---distributed 
among agents across the network, as in the partition matroid case,~see~\cite{AM-HH-AK:18, rezazadeh2023distributed}. 
Critically, since CG operates over the multilinear extension, the 
probability membership vector $\vect{x}$ evolves continuously and its components gradually 
become non-zero over the course of the algorithm. Each nonzero component $[\vect{x}(t)]_j>0$ indicates that element $j$ is 
active in the gradient estimation, so every agent must access its feature 
embedding, i.e., the vector representation used to evaluate marginal gains 
and compute $\nabla F(\vect{x})$. As $\vect{x}$ tends to become dense throughout the CG trajectory, 
this effectively forces agents to exchange the feature data of nearly 
every element in the ground set which is resulting in a communication burden 
that scales with $|\mathcal{P}|$ rather than the size of any local 
selection. This stands in sharp contrast to SG, where each agent need 
only broadcast its single final selected element upon termination, 
incurring a one-time, minimal data transmission at the price of a 
suboptimal $(1/2)$-approximation~guarantee.

\emph{Statement of contribution}: To address this overhead, we propose 
\textit{ATCG} (\underline{A}daptive \underline{T}hresholded 
\underline{C}ontinuous \underline{G}reedy), a threshold-based variant 
of CG demonstrated in a server-assisted distributed setting under a 
partition-matroid structure, where each agent maintains a local partition of the ground set and local partition 
of the probability membership vector $\vect{x}$ and relies on a central server for data exchange. Each agent selectively restricts gradient evaluations to 
a small, dynamically expanding active subset, triggering expansion 
only when the ratio of the best marginal gain within the active subset 
to that of the full partition falls below a predefined threshold 
$\tau$---the ``Adaptive'' in the name reflects this 
iteration-by-iteration expansion rule, in contrast to static or 
pre-defined truncation schemes. Since only active elements contribute 
non-zero entries to $\vect{x}$, this directly limits the volume of feature 
data exchanged between agents and the server, keeping communication 
overhead proportional to the active set size rather than 
$|\mathcal{P}|$. Theoretical analysis establishes that the coverage 
factor $\tau_{\mathrm{eff}} = \max\{\tau, 1-c\}$, jointly determined by the 
threshold parameter and the function curvature, interpolates between a threshold-controlled guarantee and a low-curvature regime where \textit{ATCG} approaches the performance of CG. A numerical example demonstrates the efficiency of the proposed algorithm in reducing communication to the server.

\emph{Related work}: Thresholding and hard-threshold operators have a 
rich history in optimization and signal processing. The compressed 
sensing literature has developed \emph{iterative hard thresholding} 
(IHT) and its variants to enforce sparsity constraints~\cite{pan2017convergent, 
yuan2014gradient, foucart2011hard}, and related works apply constrained 
thresholded updates in nonconvex optimization~\cite{liu2020between}. 
While superficially similar in name, our mechanism differs fundamentally 
in both goal and operation. The work in \cite{pan2017convergent} targets 
sparsity-constrained minimization via Armijo-type step sizes; 
\cite{yuan2014gradient} enforces sparsity per iteration via thresholded 
projection onto candidate support sets; \cite{foucart2011hard} selects 
signal support under linear measurement models; and \cite{liu2020between} 
alternates thresholding with gradient descent to balance convergence 
stability and sparsity. In contrast, \textit{ATCG} uses 
thresholding not to enforce sparsity, but to \emph{adaptively decide 
when to expand} the active set for gradient evaluation which is guided by a 
ratio of marginal gains, operating prtition-wise under a partition matroid, 
and motivated by communication efficiency rather than sparse 
minimization.

Scalability in submodular optimization has also been pursued through 
two complementary directions. \emph{Lazy greedy}~\cite{minoux1978accelerated},\cite{mirzasoleiman2015lazier} accelerates SG by caching marginal gain upper bounds 
to skip redundant evaluations, but inherits the $(1/2)$-approximation 
ceiling of SG and does not extend to the continuous domain. 
MapReduce-based approaches~\cite{mirzasoleiman2013distributed, 
kumar2015fast} achieve one-round communication efficiency through a 
partition-and-merge paradigm, but incur approximation loss---typically 
$(1/4)$ or worse---due to the absence of global coordination during 
local selection. \textit{ATCG} operates entirely within the 
CG framework, preserving the $(1-1/e)$ guarantee while achieving 
communication efficiency through an adaptive $\tau$-threshold rule, 
without partitioning computation or sacrificing near-optimality.

\section{Problem Definition}\label{sec::problem}

Consider optimization problem~\eqref{eq::canonical_prob} where the 
ground set $\mathcal{P} = \bigcup_{i=1}^{N}\mathcal{P}_i$ is a finite 
set, consisted of partitioned  $N$ disjoint subsets, where partition $\mathcal{P}_i$ 
represents the candidate elements associated with agent $i$. The utility 
function $f:2^{\mathcal{P}}\to\mathbb{R}_{+}$ is \emph{submodular}, 
i.e., it satisfies the diminishing-returns property:
\begin{equation*}
f(\mathcal{A}\cup\{e\})-f(\mathcal{A})\geq 
f(\mathcal{B}\cup\{e\})-f(\mathcal{B}),
~~ \forall\mathcal{A}\subseteq\mathcal{B}\subseteq\mathcal{P},
\; e\notin\mathcal{B}.
\end{equation*}
Moreover, it is \emph{monotone}, i.e., $f(\mathcal{A})\leq f(\mathcal{B})$ 
whenever $\mathcal{A}\subseteq\mathcal{B}$ and normal, i.e., $f(\emptyset)=0$. The matroid constraint is 
a \emph{partition matroid} 
that enforces at most $\kappa_i$ elements 
may be selected from each partition, defining the family of feasible sets:
\begin{equation}\label{eq::partion_matroid}
\mathcal{I}=\big\{\mathcal{S}\subseteq\mathcal{P}\;\big|\;
|\mathcal{S}\cap\mathcal{P}_i|\leq \kappa_i,\;\forall\,i\in\{1,\cdots,N\}\big\}.
\end{equation}
Following~\cite{calinescu2011maximizing}, to solve~\eqref{eq::canonical_prob} 
tractably, we optimize the multilinear extension $F(\mathbf{x})$ 
defined in~\eqref{eq::multi_linear} over the \emph{matroid polytope}
\begin{equation}\label{eq::polytope}
\mathcal{M}=\Big\{\mathbf{x}\in[0,1]^{|\mathcal{P}|}\,\Big|\,
\textstyle\sum\limits_{j\in\mathcal{P}_i}[\mathbf{x}]_j
\leq \kappa_i,\;\forall\,i\in\{1,\cdots,N\}\Big\},
\end{equation}
the convex hull of the feasible indicator vectors, yielding the 
relaxed problem
\begin{equation}\label{eq::cont}
\max_{\mathbf{x}\in\mathcal{M}}\;F(\mathbf{x}).
\end{equation}
The CG algorithm~\cite{calinescu2011maximizing} solves~\eqref{eq::cont} 
by initializing $\mathbf{x}(0)=\mathbf{0}$ and following the 
continuous-time ascent flow for $t\in[0,1]$
\begin{equation}\label{eq::CG_update}
\mathbf{v}(t)=\operatorname*{arg\,max}_{\mathbf{v}\in\mathcal{M}}\;
\mathbf{v}^{\!\top}\nabla F(\mathbf{x}(t)),
\qquad
\frac{\mathrm{d}}{\mathrm{d}t}\mathbf{x}(t)=\mathbf{v}(t),
\end{equation}
where the gradient of $F$ admits the stochastic representation
\begin{equation}\label{eq::grad}
[\nabla F(\mathbf{x})]_j = 
\mathbb{E}\big[f(\mathcal{R}(\mathbf{x})\cup\{j\})
-f(\mathcal{R}(\mathbf{x})\setminus\{j\})\big].
\end{equation}
A lossless rounding step using $\vect{x}(1)$ then recovers a feasible 
$\mathcal{S}\in\mathcal{I}$, achieving the 
$(1-1/e)$-approximation guarantee~\cite{calinescu2011maximizing}. 
In practice, the continuous flow~\eqref{eq::CG_update} is 
discretized into $T$ steps
\begin{equation}\label{eq::CG_disc}
\mathbf{x}\!\left(t+\tfrac{1}{T}\right)=
\mathbf{x}(t)+\tfrac{1}{T}\,\mathbf{v}(t),
\end{equation}
with the gradient estimated via Monte Carlo 
sampling~\cite{calinescu2011maximizing}.

Without loss of generality, we let $\mathcal{P}=\{1,\ldots,n\}$ 
with $n\triangleq|\mathcal{P}|$, where the elements of each partition 
$\mathcal{P}_i$ occupy a contiguous range of indices. Under this 
labeling, the $j$-th component of the probability membership  vector and its 
corresponding gradient entry are written as $[\mathbf{x}]_j$ and 
$[\nabla F(\mathbf{x})]_j$ for $j\in\mathcal{P}_i$, with the owning 
partition $i$ implicitly determined by the index $j$. The non-negativity of the multilinear gradient for monotone submodular functions, 
combined with the partition-wise structure of the matroid polytope~\eqref{eq::polytope}, 
ensures that the linear oracle in~\eqref{eq::CG_update} decomposes partition-wise. 
For $\kappa_i=1$,\footnote{While we present the case for $\kappa_i=1$ for clarity and 
ease of notation, our results and the ATCG algorithm apply to 
$\kappa_i > 1$; we refer the reader to Algorithm \ref{alg:atcg:kappa} in the Appendix for general $\kappa_i$.} the oracle to choose $\mathbf{v}(t)$ reduces to identifying, 
within each partition $i$, the single element with the largest gradient value:
\begin{equation}\label{eq::oracle_decomp}
    j_i^\star = \operatorname*{arg\,max}_{j \in \mathcal{P}_i}
    [\nabla F(\mathbf{x}(t))]_j,
\end{equation}
and the optimal ascent direction $\mathbf{v}^\star(t) \in \mathcal{M}$ is
defined componentwise as
\begin{equation}\label{eq::ascent_decomp}
    [\mathbf{v}^\star(t)]_j =
    \begin{cases}
        1 & \text{if } j = j_i^\star \text{ for some } i \in [N], \\
        0 & \text{otherwise,}
    \end{cases}
\end{equation}
so that exactly one entry of $\mathbf{x}$ per partition is
incremented by $\tfrac{1}{T}$ at each step, with the
updated entry identified by the global index $j_i^\star \in
\mathcal{P}_i$. This partitionwise separability makes CG
naturally amenable to a server-assisted distributed
realization.

\subsection{Server-Assisted Distributed Realization}
We consider a server-assisted architecture\footnote{The
server-assisted architecture bears structural resemblance to
federated learning: agents retain local data ownership,
communicate only task-relevant updates, and coordinate
through a central aggregator. However, unlike
FedAvg~\cite{mcmahan2017communication}---where the server
passively averages local model updates---the server here
serves as a coordination and assembly point, maintaining
the global decision vector $\mathbf{x}$ and the active
embedding set $\mathcal{E}$, while gradient estimation is
performed locally and in parallel by each agent.} in which
$N$ agents, each owning partition $\mathcal{P}_i$,
collaborate through a central server to execute CG in
parallel. Each agent $i$ initializes and maintains its own
local sub-vector $\mathbf{x}_i(0)=\mathbf{0}$ of the
global decision vector
$\mathbf{x}=[\mathbf{x}_1^\top,\ldots,\mathbf{x}_N^\top]^\top$.
Each iteration of the distributed CG is formalized in
Algorithm~\ref{alg:cg}: the server broadcasts the current
active embedding set $\mathcal{E}$ and decision vector
$\mathbf{x}$ to each agent (transmitting only the
components updated since the previous iteration); each
agent independently estimates its block gradient
$\{[\nabla F(\mathbf{x})]_j\}_{j\in\mathcal{P}_i}$ via
Monte Carlo sampling over $\mathcal{E}$, solves its local
oracle~\eqref{eq::oracle_decomp}, and updates its
sub-vector; agents upload their updated sub-vectors
together with the feature embedding of any newly activated
element; and the server reassembles the global state for
the next iteration.

This architecture realizes the centralized CG algorithm through
fully parallel local gradient computation at each agent similar to distributed solutions in~\cite{AM-HH-AK:18, rezazadeh2023distributed}, with the
server responsible for state assembly and incremental broadcasting.
The communication overhead is governed by the number of elements
that ever become active: each time $[\mathbf{x}_i]_{j}$ transitions
from zero to non-zero, the feature embedding of element $j$ must be
transmitted to the server and added to $\mathcal{E}$, so that it can
be broadcast to all agents for inclusion in subsequent Monte Carlo
gradient estimates via~\eqref{eq::grad}. Since each agent's local
gradient estimation requires evaluating
$f(\mathcal{R}(\mathbf{x})\cup\{j\})$ over all active elements in
$\mathcal{R}(\mathbf{x})$, every newly activated element enlarges
the sampling workload of every agent. As CG progresses and
$\mathbf{x}$ becomes dense, this forces the transmission and
caching of feature data for nearly every element of $\mathcal{P}$,
resulting in communication overhead that scales with
$|\mathcal{P}|$ rather than the $N$ locally selected elements.
Controlling which elements ever become active---and therefore which
feature embeddings are ever uploaded and broadcast---is therefore
the central motivation for \textit{ATCG}.

\begin{algorithm}[t]\footnotesize
\caption{\textbf{CG} in Server-Assisted Realization}
\label{alg:cg}
\begin{algorithmic}[1]\footnotesize
\Require Ground set $\mathcal{P}=\bigcup_{i=1}^N\mathcal{P}_i$
with disjoint partitions $\mathcal{P}_i$;
horizon $T$; number of MC samples $K$
\State \textbf{[Server]}\;
$\mathbf{x}\leftarrow\mathbf{0}$;\;
$\mathcal{E}\leftarrow\emptyset$
\State \textbf{[Agent $i$, $\forall i$]}\;
$\mathbf{x}_i\leftarrow\mathbf{0}$
\For{$t=0,1,\ldots,T-1$}
    \State \textbf{[Server$\,\to\,$Agent $i$]}\;
    Broadcast $\mathcal{E}$ and $\mathbf{x}$ to each agent
    \hfill$\triangleright$ \textit{Incremental broadcast}
    \For{$i=1,\ldots,N$}
    \hfill$\triangleright$ \textit{Local computation (parallel)}
        \State \textbf{[Agent $i$]}\;
        Estimate $[\mathbf{g}]_j \approx [\nabla F(\mathbf{x})]_j$
        for all $j\in\mathcal{P}_i$ using $K$ MC samples
        via~\eqref{eq::grad} over $\mathcal{E}$
        \hfill$\triangleright$ \textit{Gradient estimation}
        \State \textbf{[Agent $i$]}\;
        $j_i^\star \leftarrow
        \operatorname*{arg\,max}_{j\in\mathcal{P}_i}[\mathbf{g}]_j$
        \hfill $\triangleright$ via~\eqref{eq::oracle_decomp}
        \State \textbf{[Agent $i$]}\;
        $[\mathbf{x}_i]_{j_i^\star} \leftarrow
        [\mathbf{x}_i]_{j_i^\star} + \tfrac{1}{T}$
    \EndFor
    \For{$i=1,\ldots,N$}
    \hfill$\triangleright$ \textit{Upload (parallel)}
        \State \textbf{[Agent $i\,\to\,$Server]}\;
        Transmit $\mathbf{x}_i\!\left(t+\tfrac{1}{T}\right)$
        \If{$[\mathbf{x}_i]_{j_i^\star}$ becomes non-zero
        for the first time}
            \State \textbf{[Agent $i\,\to\,$Server]}\;
            Transmit embedding of $j_i^\star$;\;
            $\mathcal{E}\leftarrow\mathcal{E}\cup\{j_i^\star\}$
            \hfill$\triangleright$ \textit{Embedding upload}
        \EndIf
    \EndFor
    \State \textbf{[Server]}\; Assemble
    $\mathbf{x}\!\left(t+\tfrac{1}{T}\right)=
    [\mathbf{x}_1^\top,\ldots,\mathbf{x}_N^\top]^\top$
\EndFor
\State \textbf{[Server]}\;
\textbf{Round} $\mathbf{x}(1)$ to feasible $\mathcal{S}\in\mathcal{I}$
\State \Return $\mathcal{S}$
\end{algorithmic}
\end{algorithm}

\section{Proposed Algorithm and Theoretical 
Guarantees}\label{sec:algo_theory}

\textit{ATCG} (Algorithm~\ref{alg:atcg}) modifies the CG 
oracle by restricting gradient evaluations to dynamically 
expanding \emph{active sets} $\{\mathcal{A}_i\}_{i=1}^N$, 
where $\mathcal{A}_i \subseteq \mathcal{P}_i$ contains the 
candidate elements of partition $i$ currently participating in 
the greedy updates. Elements are admitted to $\mathcal{A}_i$ 
only when the current active set is no longer sufficiently 
representative of the best available marginal gain in 
$\mathcal{P}_i$. Since $[\mathbf{x}]_j$ can become 
non-zero only if $j \in \mathcal{A}_i$ at some iteration, 
this selective activation directly bounds which feature 
embeddings must ever be transmitted to the server.

\paragraph{Progress ratio.}
At each iteration, the quality of the active set 
$\mathcal{A}_i$ is measured by the \emph{progress ratio}
\begin{equation}\label{eq:eta_def}
\eta_i =
\frac{\max_{j\in\mathcal{A}_i}
[\nabla F(\mathbf{x})]_j}
{\max_{j\in\mathcal{P}_i}
[\nabla F(\mathbf{x})]_j},
\end{equation}
the ratio of the best marginal gain within the active 
set to that of the full partition $\mathcal{P}_i$. When 
$\eta_i\geq\tau$, the active set captures at least 
a $\tau$-fraction of the maximum available marginal 
gain and no expansion is needed. When $\eta_i<\tau$ 
and provided $\mathcal{A}_i\neq\mathcal{P}_i$, \textit{ATCG} 
admits the element with the largest gradient value 
outside the current active set:
\begin{equation*}
j_i^\star \in 
\operatorname*{arg\,max}_{j\in\mathcal{P}_i
\setminus\mathcal{A}_i}
[\nabla F(\mathbf{x})]_j,
\qquad
\mathcal{A}_i \leftarrow \mathcal{A}_i\cup\{j_i^\star\}.
\end{equation*}
The term ``Adaptive'' in \textit{ATCG} reflects this 
iteration-by-iteration expansion rule: the active set 
grows only when the current set is demonstrably 
insufficient, in contrast to static or pre-defined 
truncation schemes that fix participation in advance.

\paragraph{Ascent direction and update.}
Given $\{\mathcal{A}_i\}_{i=1}^N$, \textit{ATCG} 
restricts the oracle~\eqref{eq::oracle_decomp} to the 
active set, selecting within each partition the active 
element with the largest gradient value:
\begin{equation}\label{eq:atcg_dir}
j_i^\star \in \operatorname*{arg\,max}_{j\in\mathcal{A}_i}
[\nabla F(\mathbf{x})]_j,
\qquad
[\mathbf{v}]_{j_i^\star}=1,
\end{equation}
with $[\mathbf{v}]_j=0$ otherwise, and the decision 
vector updated as 
$\mathbf{x}\leftarrow\mathbf{x}+\tfrac{1}{T}\mathbf{v}$. 
After $T$ iterations, pipage or swap rounding converts 
the fractional solution $\mathbf{x}(1)\in\mathcal{M}$ 
into a feasible discrete set $\mathcal{S}\in\mathcal{I}$.

\paragraph{Distributed realization.}
\textit{ATCG} admits an exact realization under the 
server-assisted protocol of Section~\ref{sec::problem}, 
formalized in Algorithm~\ref{alg:atcg}. At each 
iteration, upon receiving the global state $\mathbf{x}$ 
and the current embedding set $\mathcal{E}$ from the server, 
each agent $i$ independently estimates its local block 
gradient $\{[\mathbf{g}]_j\}_{j\in\mathcal{P}_i}$ 
via~\eqref{eq::grad}. The agent then evaluates 
the progress ratio~\eqref{eq:eta_def} and, if 
$\eta_i < \tau$, expands its active set $\mathcal{A}_i$ 
by adding the best inactive element. Crucially, the 
feature embedding of this new element is transmitted to 
the server only upon its initial entry into $\mathcal{A}_i$, 
marking the first time its corresponding coordinate in 
$\mathbf{x}_i$ can become non-zero. Agent $i$ then 
computes its local ascent direction via~\eqref{eq:atcg_dir}, 
updates its sub-vector $\mathbf{x}_i$, and uploads it to the 
server. All per-partition computations proceed in 
parallel, with the server as the sole 
coordination and assembly~point.

\paragraph{Communication efficiency.}
Since $[\mathbf{x}]_j$ becomes non-zero only upon 
activation into $\mathcal{A}_i$, the total number of 
distinct feature embeddings ever uploaded to the server 
is bounded by 
$\sum_{i=1}^N|\mathcal{A}_i|\ll|\mathcal{P}|$ for 
moderate $\tau$. By triggering expansion only when 
$\eta_i<\tau$, \textit{ATCG} prevents unnecessary feature 
transmissions whenever the active set already captures 
near-optimal marginal gain, keeping communication cost 
proportional to $|\mathcal{A}_i|$ rather than 
$|\mathcal{P}_i|$.

\begin{algorithm}[t]\footnotesize
\caption{\textbf{\textit{ATCG}} in Server-Assisted Realization}
\label{alg:atcg}
\begin{algorithmic}[1]\footnotesize
\Require Ground set $\mathcal{P}=\bigcup_{i=1}^N\mathcal{P}_i$
with disjoint partitions $\mathcal{P}_i$; threshold $\tau>0$;
horizon $T$; number of MC samples $K$
\State \textbf{[Server]}\;
$\mathbf{x}\leftarrow\mathbf{0}$;\;
$\mathcal{E}\leftarrow\emptyset$
\State \textbf{[Agent $i$, $\forall i$]}\;
$\mathbf{x}_i\leftarrow\mathbf{0}$;\;
$\mathcal{A}_i\leftarrow\emptyset$
\For{$t=0,1,\ldots,T-1$}
    \State \textbf{[Server$\,\to\,$Agent $i$]}\;
    Broadcast $\mathcal{E}$ and $\mathbf{x}$ to each agent
    \hfill$\triangleright$ \textit{Incremental broadcast}
    \For{$i=1,\ldots,N$}
    \hfill$\triangleright$ \textit{Local computation (parallel)}
        \State \textbf{[Agent $i$]}\;
        Estimate $[\mathbf{g}]_j \approx [\nabla F(\mathbf{x})]_j$
        for all $j\in\mathcal{P}_i$ using $K$ MC samples
        via~\eqref{eq::grad} over $\mathcal{E}$
        \hfill$\triangleright$ \textit{Gradient estimation}
        \State \textbf{[Agent $i$]}\;
        $\eta_i \leftarrow 0$ if $\mathcal{A}_i=\emptyset$,\;
        else\;
        $\eta_i\leftarrow
        \dfrac{\max_{j\in\mathcal{A}_i}[\mathbf{g}]_j}
        {\max_{j\in\mathcal{P}_i}[\mathbf{g}]_j+10^{-12}}$
        \hfill$\triangleright$ \textit{Progress ratio}~\eqref{eq:eta_def}
        \If{$\eta_i < \tau$ \textbf{and}
        $\mathcal{A}_i \neq \mathcal{P}_i$}
        \hfill$\triangleright$ \textit{Active-set expansion}
            \State \textbf{[Agent $i$]}\;
            $j_i^\star \leftarrow
            \operatorname*{arg\,max}_{j\in
            \mathcal{P}_i\setminus\mathcal{A}_i}
            [\mathbf{g}]_j$;\;
            $\mathcal{A}_i \leftarrow
            \mathcal{A}_i \cup \{j_i^\star\}$
            \State \textbf{[Agent $i\,\to\,$Server]}\;
            Transmit embedding of $j_i^\star$;\;
            $\mathcal{E}\leftarrow\mathcal{E}
            \cup\{j_i^\star\}$
            \hfill$\triangleright$ \textit{Embedding upload}
        \EndIf
        \If{$\mathcal{A}_i \neq \emptyset$}
        \hfill$\triangleright$ \textit{Active-set oracle}~\eqref{eq:atcg_dir}
            \State \textbf{[Agent $i$]}\;
            $j_i^\star \leftarrow
            \operatorname*{arg\,max}_{j\in\mathcal{A}_i}
            [\mathbf{g}]_j$
            \State \textbf{[Agent $i$]}\;
            $[\mathbf{x}_i]_{j_i^\star} \leftarrow
            [\mathbf{x}_i]_{j_i^\star} + \tfrac{1}{T}$
        \EndIf
    \EndFor
    \For{$i=1,\ldots,N$}
    \hfill$\triangleright$ \textit{Upload (parallel)}
        \State \textbf{[Agent $i\,\to\,$Server]}\;
        Transmit $\mathbf{x}_i\!\left(t+\tfrac{1}{T}\right)$
    \EndFor
    \State \textbf{[Server]}\; Assemble
    $\mathbf{x}\!\left(t+\tfrac{1}{T}\right)=
    [\mathbf{x}_1^\top,\ldots,\mathbf{x}_N^\top]^\top$
\EndFor
\State \textbf{[Server]}\;
\textbf{Round} $\mathbf{x}(1)$ to feasible $\mathcal{S}\in\mathcal{I}$
\State \Return $\mathcal{S}$
\end{algorithmic}
\end{algorithm}

\subsection{Performance Guarantee of \textit{ATCG}}

\begin{assump}[Exact-gradient and continuous-time idealization]
\label{rem:exact_grad}
The guarantees below are stated for the exact gradient
$\nabla F(\mathbf{x})$ and the continuous-time ascent
flow~\eqref{eq::CG_update} where $\vect{v}(t)$ is decided
by \textit{ATCG} via~\eqref{eq:atcg_dir}.\hfill$\square$
\end{assump}

The impact of the practical discretization~\eqref{eq::CG_disc}
and Monte Carlo approximation~\eqref{eq::grad} follows the
standard analysis of~\cite{calinescu2011maximizing} and is
omitted; the finite-step error vanishes at $\mathcal{O}(1/T)$
and the gradient-estimation error decays at
$\mathcal{O}(1/\sqrt{K})$.

\begin{thm}[Performance guarantee of \textit{ATCG}, Algorithm~\ref{alg:atcg}]
\label{thm:atcg_tau}
Let \(f:2^{\mathcal{P}}\to\mathbb{R}_{+}\) be a monotone submodular
function, let \(F:[0,1]^{|\mathcal{P}|}\to\mathbb{R}_{+}\) be its
multilinear extension, and let \(\mathcal{M}\) denote the matroid
polytope associated with~\eqref{eq::canonical_prob}. Consider the
continuous-time \textit{ATCG} trajectory \(\mathbf{x}(t)\in\mathcal{M}\),
\(t\in[0,1]\). For some \(\tau\in(0,1]\), the active sets
satisfy
\begin{equation}\label{eq:tau-coverage}
\max_{j\in\mathcal{A}_i(t)}[\nabla F(\mathbf{x}(t))]_j
\;\geq\;
\tau
\max_{j\in\mathcal{P}_i}[\nabla F(\mathbf{x}(t))]_j,
~
\forall i\in[N],\;\forall t\in[0,1].
\end{equation}
Let \(\mathbf{x}^\star\in\arg\max_{\mathbf{x}\in\mathcal{M}}F(\mathbf{x})\).
Then
\[
F(\mathbf{x}(t))
\;\geq\;
\bigl(1-e^{-\tau t}\bigr)F(\mathbf{x}^\star),
\qquad
\forall t\in[0,1].
\]
In particular,
\[
F(\mathbf{x}(1))
\;\geq\;
\bigl(1-e^{-\tau}\bigr)F(\mathbf{x}^\star).
\]
\end{thm}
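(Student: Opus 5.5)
We follow the standard continuous-greedy argument of~\cite{calinescu2011maximizing}, inserting the $\tau$-coverage assumption~\eqref{eq:tau-coverage} at the single place where the oracle's optimality is used. The goal is the differential inequality
\[
\frac{\mathrm{d}}{\mathrm{d}t}F(\mathbf{x}(t)) \;\geq\; \tau\bigl(F(\mathbf{x}^\star)-F(\mathbf{x}(t))\bigr).
\]
By the chain rule (Assumption~\ref{rem:exact_grad}), $\frac{\mathrm{d}}{\mathrm{d}t}F(\mathbf{x}(t))=\mathbf{v}(t)^{\!\top}\nabla F(\mathbf{x}(t))$.

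\textbf{Step 1: lower-bound the realized ascent rate.} With $\kappa_i=1$, $\mathbf{v}(t)$ picks in each partition the active element of largest gradient value. Hence
\[
\mathbf{v}(t)^{\!\top}\nabla F(\mathbf{x}(t))=\sum_{i}\max_{j\in\mathcal{A}_i(t)}[\nabla F(\mathbf{x}(t))]_j\;\geq\;\tau\sum_i \max_{j\in\mathcal{P}_i}[\nabla F(\mathbf{x}(t))]_j=\tau\,\max_{\mathbf{u}\in\mathcal{M}}\mathbf{u}^{\!\top}\nabla F(\mathbf{x}(t)),
\]
where the inequality is~\eqref{eq:tau-coverage}. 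The last equality is the partitionwise decomposition~\eqref{eq::oracle_decomp}--\eqref{eq::ascent_decomp}, which uses nonnegativity of $\nabla F$ for monotone $f$. Two degenerate cases need a remark. If the full-partition maximum is zero, both sides vanish. If $\mathcal{A}_i=\emptyset$, the sum term is read as $0$, and~\eqref{eq:tau-coverage} then forces the right side to be $0$ as well.

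\textbf{Step 2: compare with the optimum.} Since $\mathbf{x}^\star\in\mathcal{M}$, we have $\max_{\mathbf{u}\in\mathcal{M}}\mathbf{u}^{\!\top}\nabla F(\mathbf{x})\geq \mathbf{x}^{\star\top}\nabla F(\mathbf{x})$. We then invoke the standard fact for monotone submodular multilinear extensions:
\[
\mathbf{x}^{\star\top}\nabla F(\mathbf{x})\;\geq\; F(\mathbf{x}\vee\mathbf{x}^\star)-F(\mathbf{x})\;\geq\; F(\mathbf{x}^\star)-F(\mathbf{x}).
\]
To prove the first inequality, move along the ray $\mathbf{x}+s\,\mathbf{w}$ with $\mathbf{w}=(\mathbf{x}\vee\mathbf{x}^\star-\mathbf{x})\geq 0$ and use concavity of $F$ along nonnegative directions (a consequence of submodularity). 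This gives $F(\mathbf{x}\vee\mathbf{x}^\star)-F(\mathbf{x})\leq \mathbf{w}^{\!\top}\nabla F(\mathbf{x})$. Then use $\mathbf{w}\leq\mathbf{x}^\star$ together with $\nabla F\geq0$. The second inequality is monotonicity. We expect this step to be the main technical point, but it is exactly the lemma in~\cite{calinescu2011maximizing}. We would either cite it or reproduce the short coordinate-wise argument: $F$ is multilinear, so $\partial^2F/\partial x_j\partial x_k\leq0$ for $j\neq k$, and $F$ is linear in each single coordinate.

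\textbf{Step 3: integrate.} Combining Steps 1 and 2 gives the differential inequality. Set $g(t)=F(\mathbf{x}^\star)-F(\mathbf{x}(t))$. Then $g'\leq-\tau g$, so $\frac{\mathrm{d}}{\mathrm{d}t}\bigl(e^{\tau t}g(t)\bigr)\leq 0$. Since $g(0)=F(\mathbf{x}^\star)-F(\mathbf{0})=F(\mathbf{x}^\star)$ because $f(\emptyset)=0$, we get $g(t)\leq e^{-\tau t}F(\mathbf{x}^\star)$, which is the claim.

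We would also note that $\mathbf{x}(t)\in\mathcal{M}$ for all $t\in[0,1]$, since each partition gains total mass at rate at most $1$. Finally, $\mathbf{v}(t)$ is piecewise constant, so $F(\mathbf{x}(t))$ is absolutely continuous and the Grönwall comparison is valid.
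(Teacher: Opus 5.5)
Your proposal is correct and follows the paper's proof essentially step for step. You apply the $\tau$-coverage condition partitionwise to get $\mathbf{v}(t)^{\top}\nabla F(\mathbf{x}(t))\ge\tau\max_{\mathbf{u}\in\mathcal{M}}\mathbf{u}^{\top}\nabla F(\mathbf{x}(t))$, compare against $\mathbf{x}^{\star\top}\nabla F(\mathbf{x}(t))\ge F(\mathbf{x}^\star)-F(\mathbf{x}(t))$, and integrate via Gr\"onwall using $F(\mathbf{0})=0$. The only additions are that you prove the multilinear-extension inequality for a fractional $\mathbf{x}^\star$ via concavity along nonnegative directions, which the paper simply cites, and that you treat the degenerate cases explicitly.
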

The proof is given in the appendix.

\begin{remark}[Comparison with classical CG]
\label{rem:cg_comparison}
Theorem~\ref{thm:atcg_tau} shows that \textit{ATCG} preserves the 
same exponential improvement structure as classical continuous greedy, 
but with rate parameter $\tau$ in place of $1$. In particular, 
classical CG satisfies 
$F(\mathbf{x}(1)) \geq (1-e^{-1})F(\mathbf{x}^\star)$,
whereas \textit{ATCG} satisfies
$F(\mathbf{x}(1)) \geq (1-e^{-\tau})F(\mathbf{x}^\star)$.
Thus, the thresholding mechanism effectively replaces the exact CG 
oracle with a $\tau$-approximate oracle induced by the active sets 
$\{\mathcal{A}_i\}$. When $\tau=1$, the bound recovers the classical 
guarantee. As $\tau$ decreases, the approximation factor degrades 
smoothly according to the active-set coverage quality, while the 
monotone ascent structure is preserved throughout. The numerical 
results further illustrate that $\tau$ is especially beneficial in 
server-assisted settings, where communication cost scales with 
$\sum_i|\mathcal{A}_i|$. By triggering expansion only when 
$\eta_i < \tau$~\eqref{eq:eta_def} and keeping only the most 
informative candidates active, \textit{ATCG} significantly reduces 
communication overhead while preserving strong optimization 
performance.\boxend
\end{remark}

Theorem~\ref{thm:atcg_tau} establishes a worst-case performance 
guarantee for \textit{ATCG} under the $\tau$-coverage 
condition~\eqref{eq:tau-coverage}. The resulting rate $1-e^{-\tau}$ 
is a conservative baseline, determined entirely by the threshold 
parameter $\tau$ and independent of any structural properties of the 
submodular objective. In practice, however, when $f$ has low total 
curvature, the performance of \textit{ATCG} can be substantially 
stronger.

 The total curvature $c\in[0,1]$ of a submodular function 
is defined as~\cite{MC-GC:84}
\begin{equation}\label{eq:curvature}
c = 1 - \min_{\mathcal{S}\subset\mathcal{P},\,
p\notin\mathcal{S}}
\frac{f(\mathcal{S}\cup\{p\})-f(\mathcal{S})}
{f(\{p\})},
\end{equation}
and measures how far $f$ deviates from modularity: $c=0$ 
corresponds to an additive function, for which an optimal 
solution is recoverable in polynomial time, while $c=1$ 
captures the strongest diminishing returns. For 
$0 < c < 1$, \cite{MC-GC:84} established that the 
sequential greedy algorithm achieves an improved 
approximation ratio of $\tfrac{1}{c}(1-e^{-c})$, which 
tightens beyond $(1-1/e)$ as $c$ decreases from $1$ 
toward $0$. This motivates examining how curvature 
interacts with the threshold parameter in \textit{ATCG}.

Intuitively, low curvature means that the marginal 
contribution of an element does not deteriorate 
significantly as other elements become 
active~\cite{vondrak1978submodularity}. In the context of \textit{ATCG}, this is especially 
favorable: if partition $\mathcal{P}_i$ already contains a highly 
informative candidate in its active set $\mathcal{A}_i$, low 
curvature implies that this candidate remains competitive throughout 
the trajectory, even as $\mathbf{x}$ evolves. As a result, the 
progress ratio $\eta_i$~\eqref{eq:eta_def} can remain well above the 
nominal threshold $\tau$, and the effective oracle quality of 
\textit{ATCG} approaches that of the full CG method.

The next result formalizes this observation. It shows that if, in 
each partition, the active set contains the best singleton element, then 
$\eta_i$ is automatically lower-bounded by $1-c$, where $c$ is the 
total curvature of $f$. Consequently, the effective convergence rate 
improves from $\tau$ to $\max\{\tau, 1-c\}$.

\begin{thm}[Curvature-aware guarantee for \textit{ATCG}, Algorithm~\ref{alg:atcg}]
\label{cor:atcg_curvature}
Under the assumptions of Theorem~\ref{thm:atcg_tau}, let $f$ have 
total curvature $c\in[0,1]$. For each partition $\mathcal{P}_i$, define
\[
j_i^\circ \in \operatorname*{arg\,max}_{j\in\mathcal{P}_i}
f(\{j\}),
\]
and suppose that $
j_i^\circ \in \mathcal{A}_i(t),
\qquad
\forall\,i\in [N]=\{1,\ldots,N\},\;\forall\,t\in[0,1].$ 
Then, for every partition $i$ and every $t\in[0,1]$,
\[
\max_{j\in\mathcal{A}_i(t)}[\nabla F(\mathbf{x}(t))]_j
\;\geq\;
(1-c)\max_{j\in\mathcal{P}_i}[\nabla F(\mathbf{x}(t))]_j.
\]
Consequently, the \textit{ATCG} trajectory satisfies
\[
F(\mathbf{x}(t))
\;\geq\;
\bigl(1-e^{-\max\{\tau,\,1-c\}\,t}\bigr)\,F(\mathbf{x}^\star),
\qquad \forall\,t\in[0,1].
\]
Then, at $t=1$, $
F(\mathbf{x}(1))
\;\geq\;
\bigl(1-e^{-\max\{\tau,\,1-c\}}\bigr)\,F(\mathbf{x}^\star).
$
\end{thm}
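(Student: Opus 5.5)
The proof splits into two parts. First we show the pointwise lower bound on the active-set gradient using curvature and submodularity. Then we feed the resulting coverage factor into the argument behind Theorem~\ref{thm:atcg_tau}.

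\emph{Step 1 (curvature bound).} Fix $i$ and $t$, write $\mathbf{x}=\mathbf{x}(t)$, and let $\mathcal{R}=\mathcal{R}(\mathbf{x})$. By~\eqref{eq::grad}, for any $j$,
\[
[\nabla F(\mathbf{x})]_j=\mathbb{E}\bigl[f(\mathcal{R}\cup\{j\})-f(\mathcal{R}\setminus\{j\})\bigr],
\]
which is the expected marginal gain of $j$ with respect to $\mathcal{R}\setminus\{j\}$.

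For the upper bound, submodularity and $f(\emptyset)=0$ give that this marginal gain is at most $f(\{j\})$. Hence for every $j\in\mathcal{P}_i$,
\[
[\nabla F(\mathbf{x})]_j\le f(\{j\})\le f(\{j_i^\circ\}).
\]

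For the lower bound, the definition~\eqref{eq:curvature} applied with $\mathcal{S}=\mathcal{R}\setminus\{j_i^\circ\}$ and $p=j_i^\circ$ gives
\[
f(\mathcal{S}\cup\{j_i^\circ\})-f(\mathcal{S})\ge(1-c)f(\{j_i^\circ\})
\]
for every realization. Taking expectations yields $[\nabla F(\mathbf{x})]_{j_i^\circ}\ge(1-c)f(\{j_i^\circ\})$.

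Combining the two bounds with the hypothesis $j_i^\circ\in\mathcal{A}_i(t)$, we get
\[
\max_{j\in\mathcal{A}_i(t)}[\nabla F(\mathbf{x})]_j\ \ge\ [\nabla F(\mathbf{x})]_{j_i^\circ}\ \ge\ (1-c)f(\{j_i^\circ\})\ \ge\ (1-c)\max_{j\in\mathcal{P}_i}[\nabla F(\mathbf{x})]_j.
\]
One edge case needs care: the curvature definition ranges over $\mathcal{S}\subset\mathcal{P}$ with $p\notin\mathcal{S}$, so the case $\mathcal{S}=\mathcal{P}\setminus\{p\}$ must be covered. If the definition is read as excluding it, I handle it by monotonicity of marginal gains in $\mathcal{S}$. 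Elements with $f(\{p\})=0$ must be treated separately; for these, submodularity forces every marginal gain to vanish, so they are harmless.

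\emph{Step 2 (rate).} Set $\tau_{\mathrm{eff}}=\max\{\tau,1-c\}$. From~\eqref{eq:tau-coverage} and Step 1, the coverage condition~\eqref{eq:tau-coverage} holds with $\tau$ replaced by $\tau_{\mathrm{eff}}$ for all $i$ and $t$.

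If $\tau_{\mathrm{eff}}>0$, Theorem~\ref{thm:atcg_tau} applied with threshold $\tau_{\mathrm{eff}}\in(0,1]$ gives the claim directly. Its proof uses only the coverage inequality, not the algorithm's specific value of $\tau$, so this substitution is valid. For completeness I would recall its core chain:
\[
\tfrac{d}{dt}F(\mathbf{x})=\mathbf{v}^\top\nabla F\ \ge\ \tau_{\mathrm{eff}}\max_{\mathbf{v}'\in\mathcal{M}}\mathbf{v}'^\top\nabla F\ \ge\ \tau_{\mathrm{eff}}\,\mathbf{x}^{\star\top}\nabla F\ \ge\ \tau_{\mathrm{eff}}\bigl(F(\mathbf{x}^\star)-F(\mathbf{x})\bigr).
\]
The first inequality uses the partitionwise decomposition~\eqref{eq::oracle_decomp} and the nonnegativity of the gradient. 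The last inequality is the standard concavity-along-nonnegative-directions bound, $F(\mathbf{x}\vee\mathbf{x}^\star)-F(\mathbf{x})\le(\mathbf{x}^\star)^\top\nabla F(\mathbf{x})$, combined with monotonicity. A Grönwall argument starting from $F(\mathbf{x}(0))=0$ then gives
\[
F(\mathbf{x}(t))\ge\bigl(1-e^{-\tau_{\mathrm{eff}}t}\bigr)F(\mathbf{x}^\star),
\]
and setting $t=1$ gives the final statement.

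The main obstacle is Step 1. Two points there need care: the correct reading of $f(\mathcal{R}\cup\{j\})-f(\mathcal{R}\setminus\{j\})$ as a marginal gain with $p\notin\mathcal{S}$, and the degenerate singletons with $f(\{p\})=0$. Step 2 is essentially a reuse of Theorem~\ref{thm:atcg_tau}.
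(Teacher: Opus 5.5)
Your proposal is correct and follows essentially the same route as the paper. You sandwich $[\nabla F(\mathbf{x}(t))]_j$ between $(1-c)f(\{j\})$ and $f(\{j\})$ (derived directly from \eqref{eq::grad}, where the paper cites it), use $j_i^\circ\in\mathcal{A}_i(t)$ to obtain coverage level $1-c$, and rerun the argument of Theorem~\ref{thm:atcg_tau} with $\tau_{\mathrm{eff}}=\max\{\tau,1-c\}$.

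One small remark: the case $\mathcal{S}=\mathcal{P}\setminus\{p\}$ is already admissible in \eqref{eq:curvature}, so no fallback is needed. That is fortunate, because your proposed fallback via diminishing returns would run in the wrong direction: the marginal gain at $\mathcal{P}\setminus\{p\}$ is the smallest one, so it cannot be lower-bounded using smaller sets.
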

The proof is given in the appendix.

\begin{rem}[Special cases and Pareto structure]\label{rem:pareto}
When $c=0$, $\max\{\tau,1-c\}=1$ and the bound recovers
the classical CG rate $1-1/e$, regardless of $\tau$.
When $\tau=1-c$, the rate is $1-e^{-(1-c)}$.
For $\tau\le 1-c$, the effective rate $1-c$ is independent
of $\tau$; any $\tau<1-c$ achieves identical performance,
making such choices dominated. Increasing $\tau$ above
$1-c$ improves the approximation toward $1-1/e$ as
$\tau\to 1$, but increases communication as shown in
Section~\ref{sec::com_eff}. Hence
$\tau^\star=1-c$ is the \textit{Pareto-optimal threshold},
simultaneously attaining minimum communication and the best
approximation factor at that cost.\hfill$\square$
\end{rem}


\subsection{Communication Efficiency}\label{sec::com_eff}

Let $C_{\mathrm{CG}}(T)$ and $C_{\mathrm{ATCG}}(T)$ denote the
cumulative number of feature embeddings uploaded to the server
through step $T$ under CG and \textit{ATCG}, respectively.
In \textit{ATCG}, a feature embedding is transmitted only when
its element first enters an active set, so communication is
governed by the growth of $\{\mathcal{A}_i\}_{i=1}^N$.
Potentially, standard \textit{CG} can select a new element
per partition at every iteration as $\mathbf{x}$ evolves, so
$C_{\mathrm{CG}}(T)\to|\mathcal{P}|$ as $T\to\infty$;
\textit{ATCG} prevents this through its per-partition threshold
mechanism: the expansion step in Algorithm~\ref{alg:atcg}
fires only when $\eta_i(t)<\tau$, directly gating which
embeddings are ever transmitted.

\begin{lem}[Communication Dominance]\label{lem:comm_dom}
Let $j_i^{\mathrm{CG}}(t)\in\arg\max_{j\in\mathcal{P}_i}
[\nabla F(\mathbf{x}(t))]_j$ denote the CG oracle selection
in partition~$i$ at iteration~$t$ along the \textit{ATCG}
trajectory, so that
$C_{\mathrm{CG}}(T) = \sum_{i=1}^N
\bigl|\bigcup_{t=0}^{T-1}\{j_i^{\mathrm{CG}}(t)\}\bigr|$.
Then $C_{\mathrm{ATCG}}(T)\le C_{\mathrm{CG}}(T)$
for all $T\ge 1$.
\end{lem}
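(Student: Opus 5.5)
The plan is to show that every embedding ATCG uploads is an embedding that the CG oracle would also have selected at that same iteration along the ATCG trajectory. This gives an injection from the set of ATCG uploads in partition $i$ into $\bigcup_{t=0}^{T-1}\{j_i^{\mathrm{CG}}(t)\}$. Summing over $i$ then yields $C_{\mathrm{ATCG}}(T)\le C_{\mathrm{CG}}(T)$.

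\textbf{Step 1 (upload count).} By Algorithm~\ref{alg:atcg}, in each partition $i$ at most one element is added to $\mathcal{A}_i$ per iteration. Each addition triggers exactly one upload, and an element enters $\mathcal{A}_i$ at most once because additions are drawn from $\mathcal{P}_i\setminus\mathcal{A}_i$. Hence $C_{\mathrm{ATCG}}(T)=\sum_{i=1}^N|\mathcal{A}_i(T)|$. Moreover, $\mathcal{A}_i(T)$ is exactly the set of elements $j_i^\star(t)$ added at the iterations $t\in\{0,\dots,T-1\}$ where expansion fires.

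\textbf{Step 2 (key claim).} Fix $i$ and an iteration $t$ at which expansion fires; I claim the added element is a maximizer of $[\nabla F(\mathbf{x}(t))]_j$ over all of $\mathcal{P}_i$. There are three cases.
\begin{itemize}
\item If $\mathcal{A}_i=\emptyset$, then $\mathcal{P}_i\setminus\mathcal{A}_i=\mathcal{P}_i$, so the claim is immediate.
\item If $\mathcal{A}_i\neq\emptyset$ and $M_i\triangleq\max_{j\in\mathcal{P}_i}[\nabla F(\mathbf{x}(t))]_j>0$, then firing requires $\eta_i<\tau\le 1$. Hence $\max_{j\in\mathcal{A}_i}[\nabla F]_j<M_i$, so no maximizer over $\mathcal{P}_i$ lies in $\mathcal{A}_i$. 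Therefore $\max_{\mathcal{P}_i\setminus\mathcal{A}_i}=\max_{\mathcal{P}_i}$, and the argmax sets coincide.
\item If $M_i=0$, monotonicity gives nonnegative gradients, so every entry is $0$. Any element of $\mathcal{P}_i\setminus\mathcal{A}_i$ is then a maximizer over $\mathcal{P}_i$.
\end{itemize}
In all cases the added element lies in $\arg\max_{j\in\mathcal{P}_i}[\nabla F(\mathbf{x}(t))]_j$.

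\textbf{Step 3 (tie-breaking).} To conclude that the added element equals $j_i^{\mathrm{CG}}(t)$ itself, I will adopt a consistent tie-breaking convention. The lemma defines $j_i^{\mathrm{CG}}(t)$ only as some element of the argmax, so I am free to choose $j_i^{\mathrm{CG}}(t)$ to be the element ATCG adds whenever expansion fires. This is legitimate because Step 2 shows that element is an admissible CG selection at iteration $t$. With this choice, $\mathcal{A}_i(T)\subseteq\bigcup_{t=0}^{T-1}\{j_i^{\mathrm{CG}}(t)\}$, and summing over $i$ proves the claim.

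I expect Step 3 to be the main obstacle, because the whole argument rests on the ties in the argmax being broken in ATCG's favour. The same concern covers two implementation details. First, the $10^{-12}$ regularization in $\eta_i$ does not disturb Step 2, since it only decreases $\eta_i$ and the argument uses only that firing happened and $\tau\le 1$. Second, both CG and ATCG must be understood as using the same gradient values $\mathbf{g}$ at each iteration, consistent with Assumption~\ref{rem:exact_grad}.
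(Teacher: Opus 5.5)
Your proof is correct and is essentially the paper's argument. Firing with $\eta_i<\tau\le 1$ forces $\max_{j\in\mathcal{A}_i}[\nabla F(\mathbf{x}(t))]_j<\max_{j\in\mathcal{P}_i}[\nabla F(\mathbf{x}(t))]_j$, so the best inactive element is a CG maximizer. With ties resolved in \textit{ATCG}'s favour (the paper's ``up to consistent tie-breaking''), this gives $\mathcal{A}_i(T)\subseteq\bigcup_{t=0}^{T-1}\{j_i^{\mathrm{CG}}(t)\}$. Your separate treatment of $\mathcal{A}_i=\emptyset$ and of an all-zero gradient block fills in edge cases that the paper's one-line proof passes over.

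One correction: your closing remark about the $10^{-12}$ regularization is backwards. Lowering $\eta_i$ makes firing \emph{easier}. With the regularized ratio, firing only yields $\max_{j\in\mathcal{A}_i}[\mathbf{g}]_j<\tau\bigl(M_i+10^{-12}\bigr)$, which does not imply $\max_{j\in\mathcal{A}_i}[\mathbf{g}]_j<M_i$, so Step~2 breaks.

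Concretely, take $N=1$, $\tau=1$, and a modular $f$ on $\mathcal{P}_1=\{a,b\}$ with $f(\{a\})>f(\{b\})>0$. The gradients are then constant, and even the Monte Carlo estimates are exact. The implemented rule activates $a$ at $t=0$. Since $\eta_1=f(\{a\})/(f(\{a\})+10^{-12})<1$, it then activates $b$ at $t=1$, whereas CG only ever selects $a$. Hence $C_{\mathrm{ATCG}}(2)=2>1=C_{\mathrm{CG}}(2)$. For $\tau<1$ the same failure occurs whenever the maximizer is already active and $M_i<\tau\cdot 10^{-12}/(1-\tau)$.

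The lemma, like the paper's proof, therefore holds only for the exact ratio~\eqref{eq:eta_def}, together with your convention for $M_i=0$. You should drop the claim that it also covers the implemented ratio.
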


\begin{proof}
Fix partition~$i$. Whenever \textit{ATCG} expands
$\mathcal{A}_i$ at iteration~$t$, $\eta_i(t)<\tau\le 1$,
so $\max_{j\in\mathcal{A}_i}[\nabla F]_j <
\max_{j\in\mathcal{P}_i}[\nabla F]_j$; hence
$j_i^{\mathrm{CG}}(t)\notin\mathcal{A}_i(t)$ and
\textit{ATCG} selects
\[
\arg\max_{j\in\mathcal{P}_i\setminus\mathcal{A}_i(t)}
[\nabla F(\mathbf{x}(t))]_j
= j_i^{\mathrm{CG}}(t),
\]
up to consistent tie-breaking.
Therefore
$\mathcal{A}_i(T)\subseteq
\bigcup_{t=0}^{T-1}\{j_i^{\mathrm{CG}}(t)\}$,
and summing over all partitions gives
$C_{\mathrm{ATCG}}(T)\le C_{\mathrm{CG}}(T)$.\boxend
\end{proof}

Since $\eta_i(t)\in[0,1]$ is computed from $K$ Monte Carlo
gradient estimates via~\eqref{eq::grad}, it fluctuates around
a nominal value $\bar{\eta}_i(t)$ whose variability decreases
as $K$ grows. To quantify the role of $\tau$ explicitly, we
adopt the following one-sided bound as a modeling assumption:
\begin{equation}\label{eq::threshold_prob}
\mathbb{P}\!\left(\eta_i(t)<\tau\right)
\;\le\;
\Phi\!\left(
\frac{\tau-\bar{\eta}_i(t)}{\sigma_i}
\right),
\quad \forall\,i,t,\tau,
\end{equation}
where $\Phi$ is the standard Gaussian CDF and $\sigma_i>0$
captures the variability of $\eta_i(t)$. This is a
conservative tail bound on the activation probability, not
a claim that $\eta_i(t)$ is Gaussian: when
$\tau\ll\bar{\eta}_i(t)$ violations are rare, and when $\tau$
increases the bound grows monotonically, reflecting the
stricter coverage~requirement. 

\begin{lem}[Expected Communication]\label{lem:exp_comm}
Under the threshold-violation model~\eqref{eq::threshold_prob}, let
\(T>1\). Then, 
{\small \[
\mathbb{E}\!\left[C_{\mathrm{ATCG}}(T)\right]
\le
\min\!\left\{
\mathbb{E}\!\left[C_{\mathrm{CG}}(T)\right],
N
\!+\!\sum_{t=1}^{T-1}\sum_{i=1}^N
\Phi\!\left(
\frac{\tau-\bar{\eta}_i(t)}{\sigma_i}
\right)\!
\right\}.
\]}
\end{lem}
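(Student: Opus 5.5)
The bound is a minimum of two quantities, so I will prove each of them separately and then combine. The first is immediate from Lemma~\ref{lem:comm_dom}. That lemma gives the pathwise inequality $C_{\mathrm{ATCG}}(T)\le C_{\mathrm{CG}}(T)$ for every realization of the Monte Carlo samples, with both counts taken along the same \textit{ATCG} trajectory. Taking expectations and using monotonicity of expectation yields $\mathbb{E}[C_{\mathrm{ATCG}}(T)]\le\mathbb{E}[C_{\mathrm{CG}}(T)]$.

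For the second term, I decompose the communication count into per-step indicators. In Algorithm~\ref{alg:atcg}, an embedding is uploaded only when the expansion branch fires. That branch adds exactly one element to $\mathcal{A}_i$, and the new element lies in $\mathcal{P}_i\setminus\mathcal{A}_i$, so it is genuinely new. Hence
\[
C_{\mathrm{ATCG}}(T)=\sum_{t=0}^{T-1}\sum_{i=1}^N \mathbbm{1}\{\eta_i(t)<\tau,\ \mathcal{A}_i(t)\neq\mathcal{P}_i\}
\le\sum_{t=0}^{T-1}\sum_{i=1}^N \mathbbm{1}\{\eta_i(t)<\tau\}.
\]
I then separate out $t=0$. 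There $\mathcal{A}_i=\emptyset$, so the algorithm sets $\eta_i=0<\tau$ deterministically, and each agent uploads exactly one embedding. This accounts for the additive $N$. It also explains why the probabilistic model is not applied at $t=0$: the bound~\eqref{eq::threshold_prob} would be uninformative or inapplicable there, since $\eta_i(0)$ is not a Monte Carlo ratio.

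For $t\ge1$, I take expectations of the indicators. Linearity gives $\sum_{t=1}^{T-1}\sum_i\mathbb{P}(\eta_i(t)<\tau)$, and I bound each term by $\Phi\!\left((\tau-\bar{\eta}_i(t))/\sigma_i\right)$ using~\eqref{eq::threshold_prob}. Combining the $t=0$ contribution with this sum gives the second expression in the minimum. Since $\mathbb{E}[C_{\mathrm{ATCG}}(T)]$ is bounded above by both expressions, it is bounded by their minimum.

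The main point of care is the interpretation of~\eqref{eq::threshold_prob}. The active sets and $\bar{\eta}_i(t)$ depend on the random past, so the probability should be read either conditionally on the history or with $\bar{\eta}_i(t)$ as the nominal value along the trajectory. If it is conditional, I apply the tower property: $\mathbb{E}[\mathbbm{1}\{\eta_i(t)<\tau\}]=\mathbb{E}[\mathbb{P}(\eta_i(t)<\tau\mid\mathcal{F}_{t})]$. This gives the bound directly when $\bar{\eta}_i(t)$ is deterministic, as the modeling assumption stipulates for all $i,t,\tau$. I will state that the model is taken to hold in this sense.
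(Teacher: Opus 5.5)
Your proposal is correct and follows essentially the same route as the paper: the first bound comes from taking expectations in Lemma~\ref{lem:comm_dom}. The second comes from writing $C_{\mathrm{ATCG}}(T)=N+\sum_{t=1}^{T-1}\sum_{i=1}^N Z_{i,t}$ using the deterministic activation at $t=0$, dropping the event $\mathcal{A}_i(t)\ne\mathcal{P}_i$, and applying~\eqref{eq::threshold_prob}. Your remark that a conditional reading of the model needs $\bar{\eta}_i(t)$ to be deterministic for the tower-property step is extra care the paper omits, not a different argument.
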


\begin{proof}
Define $Z_{i,t}=\mathbf{1}\{\eta_i(t)<\tau,\,
\mathcal{A}_i(t)\ne\mathcal{P}_i\}$.
Since \(\mathcal{A}_i(0)=\emptyset\), \textit{ATCG} activates
one largest-marginal element from each partition at the first iteration,
which accounts for \(N\) initial transmitted feature embeddings. Hence, for \(T>1\)
$C_{\mathrm{ATCG}}(T)=N+\sum_{t=1}^{T-1}\sum_{i=1}^N Z_{i,t}$.
Taking expectations and applying
\(
\mathbb{P}\!\left(\eta_i(t)<\tau,\;
\mathcal{A}_i(t)\ne\mathcal{P}_i\right)
\;\le\;
\mathbb{P}\!\left(\eta_i(t)<\tau\right)
\;\le\;
\Phi\!\left(\frac{\tau-\bar{\eta}_i(t)}{\sigma_i}\right)
\)
yields the second bound.
Taking expectations in Lemma~\ref{lem:comm_dom}
gives the first.
Combining both bounds via the minimum completes the proof.\boxend
\end{proof}
This bound highlights that communication is driven not by the full gradient dynamics, but by rare excursions where the active set fails to capture sufficient marginal gain, so when $\bar{\eta}_i(t)$ remains above $\tau$ most of the time, the communication cost stays close to its minimum.

Next, we characterize the minimum
achievable communication savings. For this, we define the \textit{partition curvature}
\begin{equation*}
    c_i = 1 - \min_{\mathcal{S}\subset\mathcal{P},\;
    p\in\mathcal{P}_i\setminus\mathcal{S}}
    \frac{f(\mathcal{S}\cup\{p\})-f(\mathcal{S})}{f(\{p\})}
\end{equation*}
which measures marginal-gain deterioration for elements of
partition~$i$; under low curvature, a highly informative element
in $\mathcal{A}_i$ remains competitive as $\mathbf{x}$ evolves,
as formalized below.

\begin{lem}[Communication Complexity]\label{lem:comm}
Let $C_{\mathrm{ATCG}}(T)=\sum_{i=1}^{N}|\mathcal{A}_i(T)|$.
For any $i\in[N]$, if $\tau\le 1-c_i$, then
$\eta_i(t)\ge\tau$ for all $t\ge 1$,
so $|\mathcal{A}_i(T)|=1$ for all $T\ge 1$.
Moreover, if $\tau\le 1-\max_{i\in[N]}\{c_i\}$, then
$C_{\mathrm{ATCG}}(T)=N$ for all $T\ge 1$,
attaining the minimum possible communication of one
embedding per~partition.
\end{lem}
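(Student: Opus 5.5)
The plan is to argue partition by partition, by induction over the iterations, working in the exact-gradient idealization of Assumption~\ref{rem:exact_grad}. At $t=0$ we have $\mathcal{A}_i(0)=\emptyset$, so Algorithm~\ref{alg:atcg} sets $\eta_i=0<\tau$ and admits $j_i^\star\in\arg\max_{j\in\mathcal{P}_i}[\nabla F(\mathbf{0})]_j$. Since $\mathcal{R}(\mathbf{0})=\emptyset$ almost surely, \eqref{eq::grad} with $f(\emptyset)=0$ gives $[\nabla F(\mathbf{0})]_j=f(\{j\})$. Hence the first admitted element is a best singleton $j_i^\circ\in\arg\max_{j\in\mathcal{P}_i}f(\{j\})$, and $|\mathcal{A}_i(1)|=1$. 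This is the same element that Theorem~\ref{cor:atcg_curvature} assumes is active.

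The core step is a pointwise lower bound on the progress ratio that holds whenever $\mathcal{A}_i=\{j_i^\circ\}$, for an arbitrary $\mathbf{x}\in\mathcal{M}$.
\begin{itemize}
\item \emph{Numerator.} Every realization $\mathcal{S}=\mathcal{R}(\mathbf{x})\setminus\{j_i^\circ\}$ satisfies $\mathcal{S}\subset\mathcal{P}$ and $j_i^\circ\in\mathcal{P}_i\setminus\mathcal{S}$. By the definition of the partition curvature, $f(\mathcal{S}\cup\{j_i^\circ\})-f(\mathcal{S})\ge(1-c_i)f(\{j_i^\circ\})$. Taking expectations in \eqref{eq::grad} gives $[\nabla F(\mathbf{x})]_{j_i^\circ}\ge(1-c_i)f(\{j_i^\circ\})$.
\item \emph{Denominator.} For any $j\in\mathcal{P}_i$, submodularity with $\emptyset\subseteq\mathcal{R}(\mathbf{x})\setminus\{j\}$ and $f(\emptyset)=0$ gives $f(\mathcal{R}\cup\{j\})-f(\mathcal{R}\setminus\{j\})\le f(\{j\})\le f(\{j_i^\circ\})$. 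Hence $\max_{j\in\mathcal{P}_i}[\nabla F(\mathbf{x})]_j\le f(\{j_i^\circ\})$.
\end{itemize}
Dividing the two bounds yields $\eta_i\ge 1-c_i\ge\tau$. This is the partition-wise analogue of the first claim of Theorem~\ref{cor:atcg_curvature}. The extra $10^{-12}$ in the denominator of Algorithm~\ref{alg:atcg} only enlarges the denominator by a negligible amount, which I will treat as immaterial in the idealized analysis.

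With that bound in hand, the induction is immediate. If $\mathcal{A}_i(t)=\{j_i^\circ\}$ at some $t\ge1$, then $\eta_i(t)\ge\tau$, the expansion test fails, and $\mathcal{A}_i(t+1)=\{j_i^\circ\}$. Hence $|\mathcal{A}_i(T)|=1$ for all $T\ge1$. If $\tau\le 1-\max_i c_i$, the hypothesis $\tau\le 1-c_i$ holds for every $i$, so summing gives $C_{\mathrm{ATCG}}(T)=N$. This count is minimal, because each partition must activate at least one element before its coordinate of $\mathbf{x}$ can move; equivalently, it matches the $N$ initial transmissions identified in the proof of Lemma~\ref{lem:exp_comm}.

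The main obstacle is the degenerate case $f(\{j_i^\circ\})=0$. There all gradients in $\mathcal{P}_i$ vanish, so the curvature ratio is undefined, and the computed $\eta_i$ equals $0$, which would trigger spurious expansions. I would exclude this case explicitly: either assume $f(\{j\})>0$, as is implicit in the definition of $c_i$, or adopt the convention $\eta_i=1$ when the denominator vanishes. I would likewise note that $\tau>0$ forces $c_i<1$ in the hypothesis. The only other subtlety is that the argument relies on exact gradients; with Monte Carlo estimates the claim holds only in the sense of Lemma~\ref{lem:exp_comm}.
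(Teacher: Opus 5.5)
Your proof is correct and follows the paper's argument. The best singleton $j_i^\circ$ is activated at $t=0$. After that, the partition-curvature lower bound $[\nabla F(\mathbf{x})]_{j_i^\circ}\ge(1-c_i)f(\{j_i^\circ\})$ and the submodularity upper bound $[\nabla F(\mathbf{x})]_j\le f(\{j\})\le f(\{j_i^\circ\})$ give $\eta_i(t)\ge 1-c_i\ge\tau$, so the expansion step never fires again. Your additions are refinements rather than a different route: the explicit check that $[\nabla F(\mathbf{0})]_j=f(\{j\})$ (which the paper only asserts), and the remarks on the degenerate case $f(\{j_i^\circ\})=0$ and the $10^{-12}$ regularizer.
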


\begin{proof}
At $t=0$, $\mathcal{A}_i=\emptyset$ so $\eta_i=0<\tau$,
and \textit{ATCG} activates
$j_i^\circ=\arg\max_{j\in\mathcal{P}_i}f(\{j\})$
in each partition.
We show $\eta_i(t)\ge 1-c_i\ge\tau$ for all $t\ge 1$,
so the expansion condition is never triggered again. Since $j_i^\circ\in\mathcal{A}_i(t)$ for all $t\ge 1$
by monotonicity of the active sets,
\begin{equation*}
    \max_{j\in\mathcal{A}_i(t)}[\nabla F(\mathbf{x}(t))]_j
    \;\ge\; [\nabla F(\mathbf{x}(t))]_{j_i^\circ}
    \;\ge\; (1-c_i)\,f(\{j_i^\circ\}),
\end{equation*}
where the second inequality uses the partition curvature
lower bound $[\nabla F(\mathbf{x})]_j\ge(1-c_i)f(\{j\})$
for $j\in\mathcal{P}_i$, obtained by taking expectations
in $f(\mathcal{S}\cup\{j\})-f(\mathcal{S})\ge(1-c_i)f(\{j\})$
over $\mathcal{S}=\mathcal{R}(\mathbf{x})$~\eqref{eq::grad}.
Submodularity (set $\mathcal{S}=\emptyset$) and expectations
via~\eqref{eq::grad} give
$[\nabla F(\mathbf{x}(t))]_j\le f(\{j\})\le f(\{j_i^\circ\})$
for all $j\in\mathcal{P}_i$:
\begin{equation*}
  \frac{\max_{j\in\mathcal{A}_i(t)}
           [\nabla F(\mathbf{x}(t))]_j}
          {\max_{j\in\mathcal{P}_i}
           [\nabla F(\mathbf{x}(t))]_j}
    \;\ge\;
    \frac{(1-c_i)\,f(\{j_i^\circ\})}{f(\{j_i^\circ\})}
    = 1-c_i.
\end{equation*}
Therefore,   $\eta_i(t)\geq 1-c_i\;\ge\;\tau.$
The expansion condition is never triggered after $t=0$,
so $|\mathcal{A}_i(T)|=1$ for all $T\ge 1$.
Applying this to every $i\in[N]$ under
$\tau\le 1-\max_i c_i$ gives $C_{\mathrm{ATCG}}(T)=N$.
\end{proof}

Combined with Theorem~\ref{cor:atcg_curvature},
Lemma~\ref{lem:comm} completes the Pareto picture
established in Remark~\ref{rem:pareto}:
since $c_i\le c$ for all $i$, the per-partition bound
gives $1-\max_i c_i\ge 1-c$, strictly widening the
safe regime relative to the global curvature bound,
and as $\max_i c_i\to 0$ this expands to $\tau\in(0,1]$,
permitting $C_{\mathrm{ATCG}}(T)=N$ and near-optimal
$(1-1/e)$ performance~simultaneously.


\begin{figure}[t]
\centering
\includegraphics[scale=0.3]{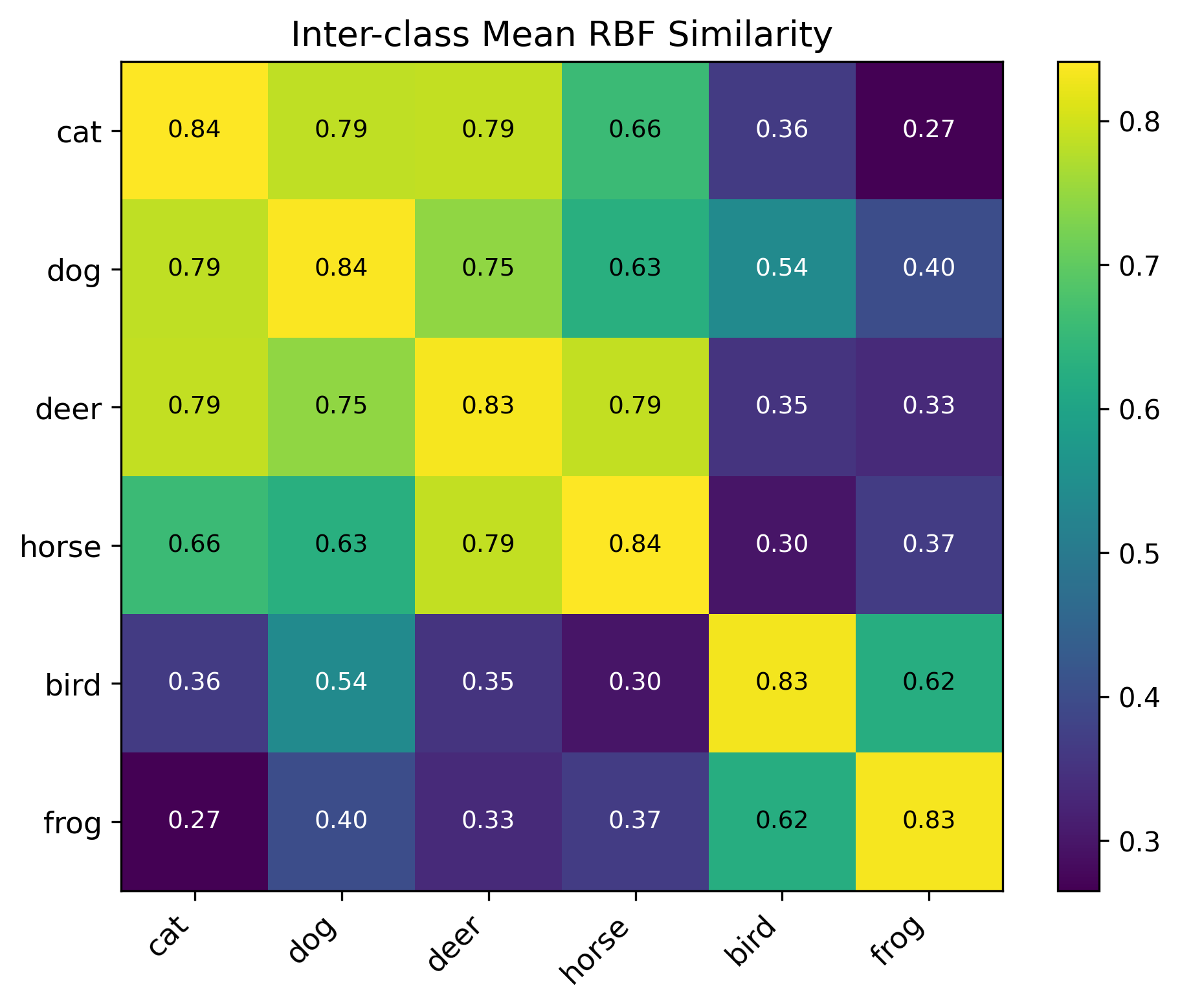}
\caption{\small Objective trajectory \(F(\mathbf{x}_t)\) versus iteration for CG, SCG,
and \textit{ATCG} with different threshold values \(\tau\) on the CIFAR
animal subset. }
\label{rbf:fig}
\end{figure}

\section{Numerical Evaluation}\label{numerical}
We evaluate \textit{ATCG} on two experiments spanning
different data modalities and objectives to demonstrate
the robustness of the communication--performance tradeoff
controlled by $\tau$. The first uses a CIFAR-10 animal
subset under a facility-location objective to illustrate
active-set stabilization and objective tracking relative
to CG and SCG. The second uses the MovieLens dataset
under a collaborative-filtering objective to show that
the role of $\tau$ is consistent and, in this case, more
pronounced across a wider range of operating points.

\subsection{CIFAR-10 Animal Subset: Active-Set Stabilization}
We consider a target monitoring task involving a group of $N$ robots. Each robot is provided with a large, labeled image archive $\mathcal{D}$ consisting of images from $N$ target categories (e.g., $N$ distinct groups of animals). These archival images can be visualized as the blue data points in Fig.~\ref{fig:forward_gradiant} within a feature space where similar categories naturally form clusters, though some features may overlap. Each robot $i$ independently collects live target images from its patrol area, producing a local image set $\mathcal{P}_i$ (like the cross points in Fig.~\ref{fig:forward_gradiant})\footnote{In Fig.~\ref{fig:forward_gradiant}, the matroid constraint is a partition matroid, with non-overlapping local ground sets defined for each agent $i \in [N] = \{1, \dots, N\}$ as $\mathcal{P}_i = \{(i, b) \mid b \in \mathcal{B}_i\}$, where $\mathcal{B}_i$ represents the selection choices of agent $i$.}. The union $\mathcal{P} = \bigcup_{i=1}^{N} \mathcal{P}_i$ constitutes the full ground set of candidate images distributed across the team. 
Using the labeled archive as a reference, the team performs \emph{exemplar clustering}~\cite{krause2014submodular} so that each agent selects a representative target point from $\mathcal{P}_i$. Collectively, these choices are intended to form a set of $N$ data points that best characterize each target category so each agent can focus on one representation. An agent's choice directly impacts others; if one agent selects a specific representative to monitor, the other robots must focus on covering representatives from the remaining classes to ensure diverse coverage.

We instantiate this scenario on a subset of CIFAR-10 comprising six animal categories (deer, frog, bird, horse, cat, and dog). We randomly sample a subset of data from this dataset to construct the local set $\mathcal{P}_i$ of each agent. Images are embedded via a pretrained ResNet, and pairwise similarities are computed with the RBF kernel $K(p,q) = \exp(-\|\mathbf{z}_p - \mathbf{z}_q\|_2^2 / (2\sigma^2))$. The global utility is defined by the facility-location objective $f(\mathcal{S}) = \sum_{p \in \mathcal{P}} \max_{q \in \mathcal{S}} K(p,q)$ for any $\mathcal{S}\subseteq\mathcal{P}$, which is monotone submodular and measures how effectively the selected prototypes collectively cover all observed images in $\mathcal{P}$. The optimal multi-agent selection problem requires that $\mathcal{S}\in\mathcal{I}$, where $\mathcal{I}$ is the partition matroid defined in~\eqref{eq::partion_matroid}. The inter-class RBF similarity matrix is shown in Fig.~\ref{rbf:fig} confirms non-negligible cross-partition similarity among visually related categories such as deer, horse, cat, and dog. This coupling renders the objective non-separable across partitions, necessitating the globally coordinated active-set expansion of \textit{ATCG}. Under the server-assisted protocol of Section~\ref{sec:algo_theory}, the server
facilitates local active feature exchange and sharing aggregated global probability membership vector $\vect{x}$ among agents. 

\begin{figure}[t]
\centering
\includegraphics[scale=0.3]{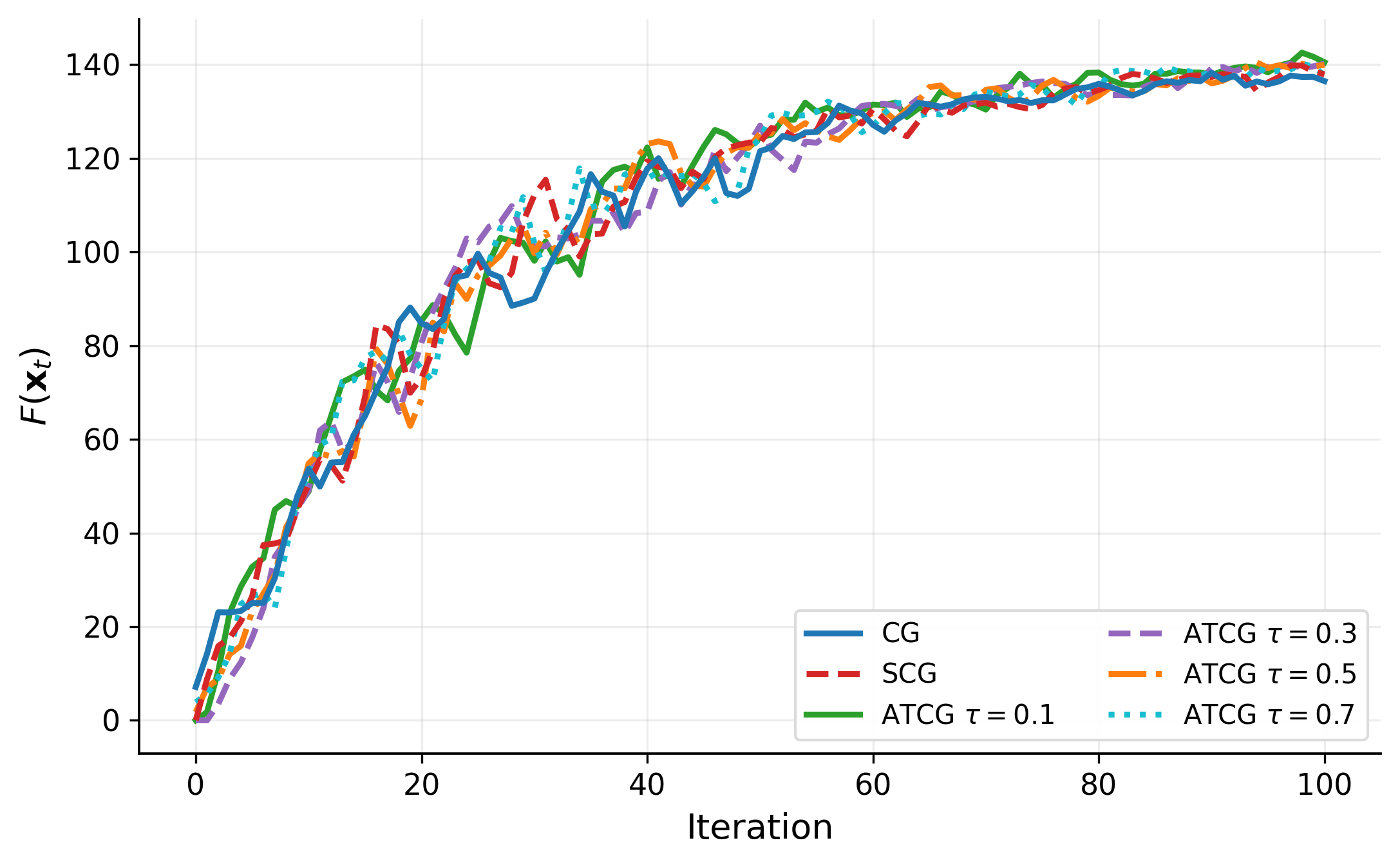}
\caption{\small Objective trajectory \(F(\mathbf{x}_t)\) versus iteration for CG, SCG,
and \textit{ATCG} with different threshold values \(\tau\) on the CIFAR
animal subset. }
\label{fig:traj_cifar_atcg}
\end{figure}

\begin{figure}[t]
\centering
\includegraphics[scale=0.3]{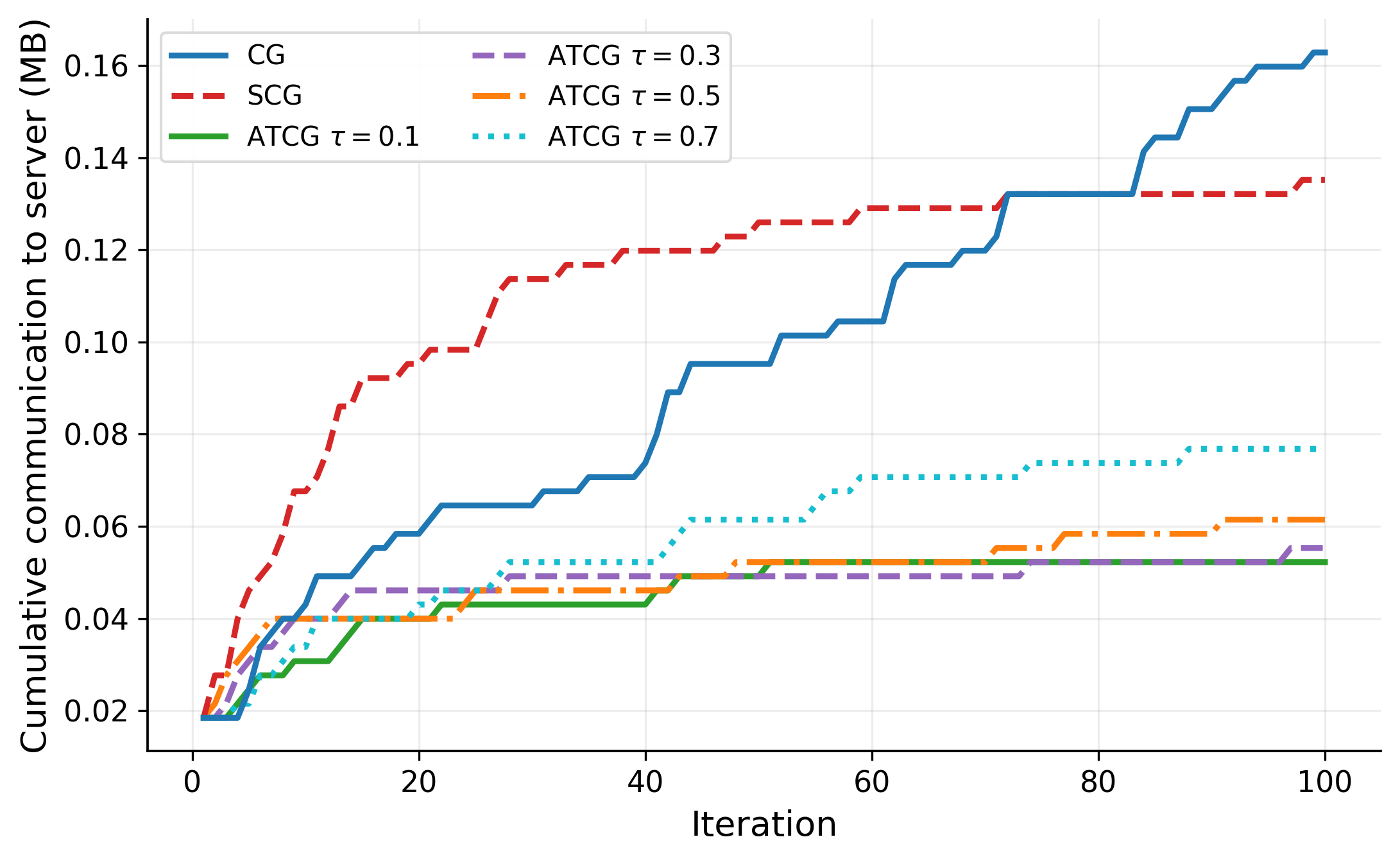}
\caption{\small Cumulative communication uploaded to the server for CG, SCG, and
\textit{ATCG} with different threshold values \(\tau\). 
}
\label{fig:comm_cifar_atcg}\vspace{-0.1in}
\end{figure}

\textbf{Objective quality (Fig.~\ref{fig:traj_cifar_atcg})} shows that
\textit{ATCG} tracks CG throughout all \(100\) iterations without visible
deviation across the tested threshold values. For \(\tau=0.3\), the final
discrete values after partition-wise argmax rounding are \(144.89\) (CG)
and \(143.63\) (\textit{ATCG})---a gap below \(1\%\)---confirming the
prediction of Theorem~III.1: restricting gradient evaluations to the
active sets \(\{\mathcal{A}_i\}\) via the \(\tau\)-coverage
rule~\eqref{eq:eta_def} introduces only a controlled approximation loss
relative to full~CG.


\textbf{Communication reduction (Fig.~\ref{fig:comm_cifar_atcg}).}
In contrast to CG and SCG \cite{mokhtari2018conditional}, whose communication grows steadily as $\mathbf{x}$
densifies, \textit{ATCG}'s cumulative upload curve bends over and becomes
completely flat well before the optimization terminates. This is the direct
signature of active-set stabilization, i.e., once every $\eta_i \geq \tau$, the
expansion rule in Algorithm~2 ceases to fire and no further feature
embeddings are sent to the server--yet the objective continues to improve on
the already-cached~information.

\begin{figure}[t]
\centering
\includegraphics[scale=0.3]{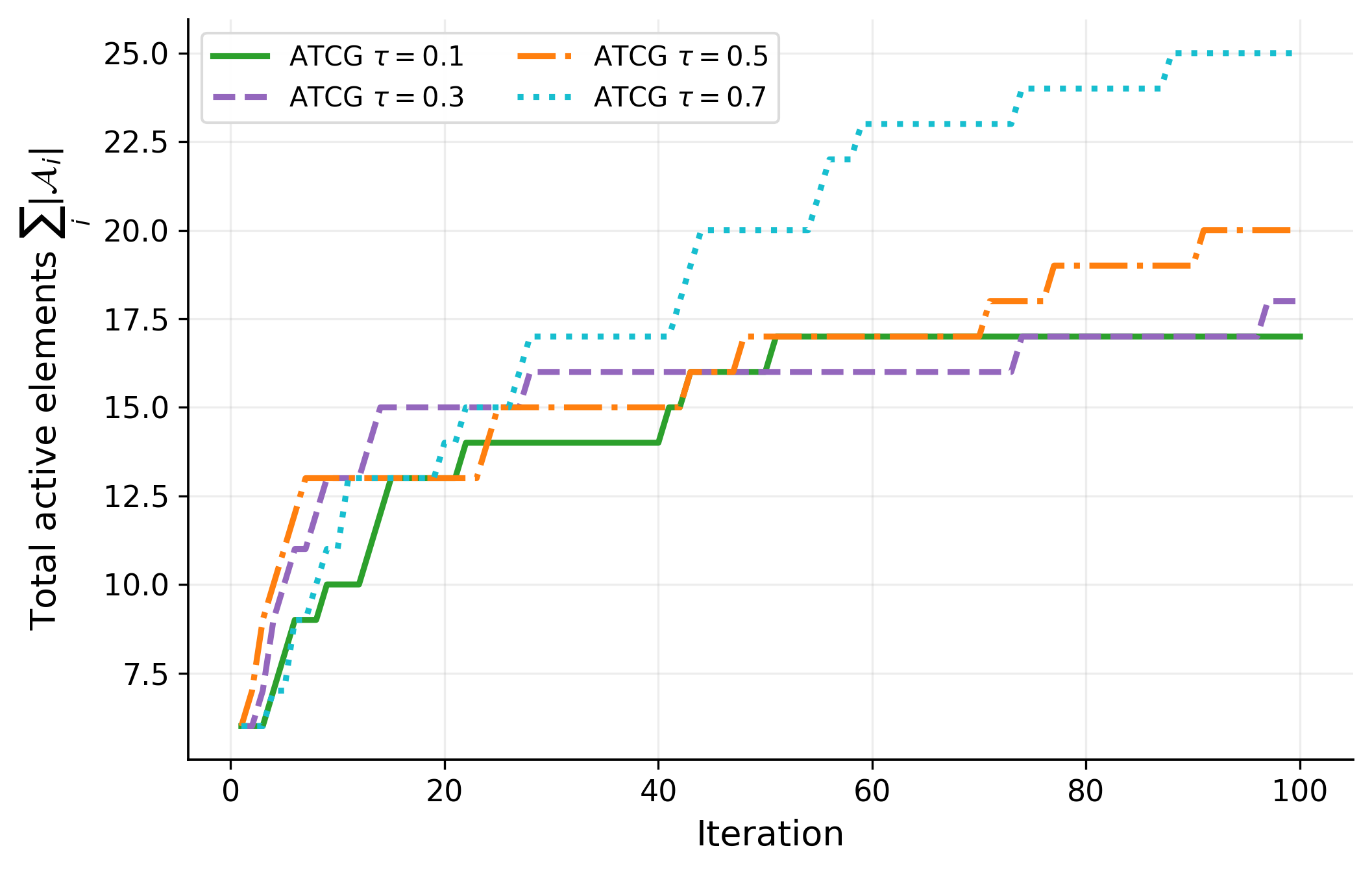}
\caption{\small Total active-set size \(\sum_i|\mathcal{A}_i|\) versus iteration for
\textit{ATCG} with different threshold values \(\tau\). 
}
\label{fig:active_growth_cifar_atcg}\vspace{-0.2in}
\end{figure}

\textbf{Active-set dynamics (Fig.~\ref{fig:active_growth_cifar_atcg}).}
Figure~\ref{fig:active_growth_cifar_atcg} makes the communication cutoff
mechanistically transparent. In the early iterations, several partitions
have \(\eta_i < \tau\), so the expansion step in \textit{ATCG} fires and
\(\sum_i|\mathcal{A}_i|\) grows. Once sufficiently informative candidates
are admitted, the partitions satisfy \(\eta_i \geq \tau\) and the curves
level off to constants. These plateaus directly correspond to the flat
regions of Fig.~\ref{fig:comm_cifar_atcg}: after stabilization,
optimization proceeds entirely on cached embeddings with zero additional
uploads. The plateau values \(\sum_i|\mathcal{A}_i| \ll |\mathcal{P}|\)
represent the total communication cost of \textit{ATCG}, consistent with
the communication bound established in Section~\ref{sec:algo_theory}.
Moreover, the different plateau values across \(\tau\) illustrate the
expected threshold-dependent tradeoff: larger \(\tau\) values impose a
stricter active-set coverage requirement and therefore activate more
elements, while still preserving objective values close to full~CG.

\subsection{MovieLens: Communication--Performance Tradeoff}

To further examine the communication--performance tradeoff of
\textit{ATCG}, we also evaluate the proposed method on the MovieLens
dataset. We use the facility-location objective
$
f(S)=\frac{1}{N}\sum_{u=1}^{N}\max_{j\in S} r_{u,j},
$
where \(r_{u,j}\) denotes the rating of user \(u\) for movie \(j\). Thus,
the selected set \(S\) is evaluated by the average user satisfaction,
where each user is represented by the highest-rated movie available in
the selected set. To match the partition-matroid setting, the movie
ground set is split into balanced partitions and \textit{ATCG} is
compared with CG and SCG.

The results in Fig.~\ref{fig:movielens_traj_atcg} show that the effect
of the threshold parameter \(\tau\) is more pronounced in this example.
CG achieves \(f(S)\approx 4.82\) with \(87\) communicated elements, while
SCG reaches a comparable value \(f(S)\approx 4.78\) but requires \(204\)
communicated elements. The larger communication cost of SCG is caused by
the stochasticity in its gradient estimates, which makes the oracle
select a more variable set of elements across iterations and therefore
activates more distinct elements. In contrast, \textit{ATCG} with
\(\tau=0.3\) communicates only \(16\) elements and obtains
\(f(S)\approx 4.58\), while increasing the threshold to \(\tau=0.7\)
improves the value to \(f(S)\approx 4.81\) using only \(33\)
communicated elements. Hence, larger \(\tau\) improves objective quality
at the cost of higher communication, while still requiring substantially
fewer communicated elements than both CG and SCG. This additional
experiment further confirms that \(\tau\) directly controls the
communication--performance tradeoff of \textit{ATCG}.

\begin{figure}[t]
    \centering
    \includegraphics[scale=0.31]{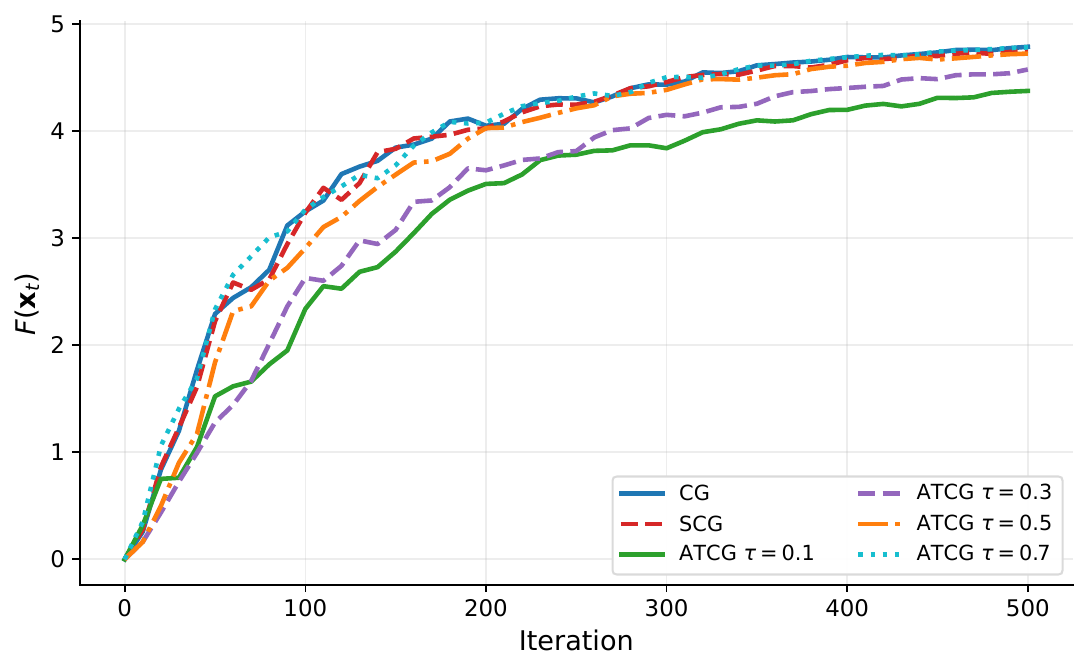}
    \caption{
    Objective trajectory \(F(\mathbf{x}_t)\) on the MovieLens facility-location
    experiment. The results compare CG, SCG, and \textit{ATCG} for different
    values of \(\tau\). Larger values of \(\tau\) improve the objective value
    and approach the CG/SCG baselines, while smaller values of \(\tau\) lead to
    lower communication at the cost of reduced objective value.
    }
    \label{fig:movielens_traj_atcg}
\end{figure}

\section{Conclusion}\label{sec::conclu}
This paper introduced \textit{ATCG}, a threshold-driven variant of the continuous greedy algorithm for submodular maximization under partition-matroid constraints. By restricting gradient evaluations to dynamically expanding active sets, \textit{ATCG} preserves the $(1-e^{-\tau})$ approximation of the classical method while substantially reducing communication overhead. A curvature-aware analysis shows the effective rate improves to $1-e^{-\max\{\tau,\,1-c\}}$, recovering the $(1-1/e)$ guarantee as curvature vanishes. Importantly, the established framework can facilitate the convergence analysis of hard-thresholding variants, where coordinates are explicitly zeroed. Future work will explore smart policies to proactively plan thresholding and activation rather than waiting for set saturation, alongside extensions to fully decentralized networks.

\bibliographystyle{ieeetr}%
\bibliography{bib/alias,bib/Reference}

\appendix

\renewcommand{\theequation}{A.\arabic{equation}}
\renewcommand{\thethm}{A.\arabic{thm}}
\renewcommand{\thelem}{A.\arabic{lem}}
\renewcommand{\thedefn}{A.\arabic{defn}}

\section*{A. Proof of convergence theorems}
This section presents the proofs of the results in the paper.

\begin{proof}[Proof of Theorem~\ref{thm:atcg_tau}]
Let $\mathbf{v}_{\mathrm{ATCG}}(t)$ denote the ascent direction 
selected by \textit{ATCG} at time $t$, and let
\[
\bar{\mathbf{v}}(t)\in \operatorname*{arg\,max}_{\mathbf{v}
\in\mathcal{M}}\;
\mathbf{v}^{\!\top}\nabla F(\mathbf{x}(t))
\]
denote the exact continuous greedy direction. Recall that this oracle decomposes partition-wise 
via~\eqref{eq::oracle_decomp}. For each partition $i$, the exact oracle 
selects
\[
j_i^\star(t)\in \operatorname*{arg\,max}_{j\in\mathcal{P}_i} 
[\nabla F(\mathbf{x}(t))]_j,
\]
while \textit{ATCG} selects
\[
\hat{j}_i(t)\in \operatorname*{arg\,max}_{j\in\mathcal{A}_i(t)} 
[\nabla F(\mathbf{x}(t))]_j.
\]
By the $\tau$-coverage condition~\eqref{eq:tau-coverage},
\[
[\nabla F(\mathbf{x}(t))]_{\hat{j}_i(t)}
\;\geq\;
\tau\,[\nabla F(\mathbf{x}(t))]_{j_i^\star(t)},
\qquad \forall\, i.
\]
Summing over all partitions yields
\[
\mathbf{v}_{\mathrm{ATCG}}^{\!\top}(t)\,\nabla F(\mathbf{x}(t))
\;\geq\;
\tau\;
\bar{\mathbf{v}}^{\!\top}(t)\,\nabla F(\mathbf{x}(t)).
\]
Since $\mathbf{x}^\star\in\mathcal{M}$, optimality of 
$\bar{\mathbf{v}}(t)$ gives
\[
\bar{\mathbf{v}}^{\!\top}(t)\,\nabla F(\mathbf{x}(t))
\;\geq\;
{\mathbf{x}^\star}^{\!\top}\nabla F(\mathbf{x}(t)).
\]
Moreover, a standard property of the multilinear extension of a 
monotone submodular function states that~\cite{calinescu2011maximizing}
\[
{\mathbf{x}^\star}^{\!\top}\nabla F(\mathbf{x}(t))
\;\geq\;
F(\mathbf{x}^\star)-F(\mathbf{x}(t)).
\]
Combining the three inequalities gives
\[
\mathbf{v}_{\mathrm{ATCG}}^{\!\top}(t)\,\nabla F(\mathbf{x}(t))
\;\geq\;
\tau\bigl(F(\mathbf{x}^\star)-F(\mathbf{x}(t))\bigr).
\]
Since $\dot{\mathbf{x}}(t)=\mathbf{v}_{\mathrm{ATCG}}(t)$, the 
chain rule gives
\[
\frac{d}{dt} F(\mathbf{x}(t))
=
\nabla F(\mathbf{x}(t))^{\!\top}\dot{\mathbf{x}}(t)
=
\nabla F(\mathbf{x}(t))^{\!\top}\mathbf{v}_{\mathrm{ATCG}}(t).
\]
Therefore,
\[
\frac{d}{dt} F(\mathbf{x}(t))
\;\geq\;
\tau\bigl(F(\mathbf{x}^\star)-F(\mathbf{x}(t))\bigr).
\]
Define $g(t):=F(\mathbf{x}^\star)-F(\mathbf{x}(t))\geq 0$. 
Then $g'(t)\leq -\tau g(t)$. By Gr\"{o}nwall's 
inequality~\cite{khalil2002nonlinear}, and using 
$F(\mathbf{x}(0))=F(\mathbf{0})=0$,
\[
g(t)\leq e^{-\tau t}g(0)=e^{-\tau t}F(\mathbf{x}^\star).
\]
Equivalently,
\[
F(\mathbf{x}(t))
\;\geq\;
\bigl(1-e^{-\tau t}\bigr)\,F(\mathbf{x}^\star).
\]
Setting $t=1$ completes the proof. Applying pipage or swap rounding 
to the fractional solution $\mathbf{x}(1)$ then yields a feasible 
discrete set $\mathcal{S}_{\textit{ATCG}}\in\mathcal{I}$ satisfying
\[
\mathbb{E}\bigl[f(\mathcal{S}_{\textit{ATCG}})\bigr]
\;\geq\;
\bigl(1-e^{-\tau}\bigr)\,f(\mathcal{S}^\star),
\qquad
\mathcal{S}^\star \in 
\operatorname*{arg\,max}_{\mathcal{S}\in\mathcal{I}} f(\mathcal{S}).
\]
\end{proof}

\medskip
\begin{proof}[Proof of Theorem~\ref{cor:atcg_curvature}]
Fix any $t\in[0,1]$ and any partition $i$. Since 
$j_i^\circ\in\mathcal{A}_i(t)$ by assumption,
\[
\max_{j\in\mathcal{A}_i(t)}[\nabla F(\mathbf{x}(t))]_j
\;\geq\;
[\nabla F(\mathbf{x}(t))]_{j_i^\circ}.
\]
By the curvature definition~\eqref{eq:curvature} for the multilinear extension, for every 
$j\in\mathcal{P}$~\cite{vondrak1978submodularity},
\[
(1-c)\,f(\{j\})
\;\leq\;
[\nabla F(\mathbf{x}(t))]_j
\;\leq\;
f(\{j\}).
\]
Applying the lower bound to $j_i^\circ$ gives
\[
[\nabla F(\mathbf{x}(t))]_{j_i^\circ}
\;\geq\;
(1-c)\,f(\{j_i^\circ\}).
\]
On the other hand, the upper bound applied to any $j\in\mathcal{P}_i$ 
yields
\[
\max_{j\in\mathcal{P}_i}[\nabla F(\mathbf{x}(t))]_j
\;\leq\;
\max_{j\in\mathcal{P}_i} f(\{j\})
=
f(\{j_i^\circ\}).
\]
Combining these three inequalities,
\[
\max_{j\in\mathcal{A}_i(t)}[\nabla F(\mathbf{x}(t))]_j
\;\geq\;
(1-c)\max_{j\in\mathcal{P}_i}[\nabla F(\mathbf{x}(t))]_j.
\]
Since $t$ and $i$ were arbitrary, this bound holds for all 
$t\in[0,1]$ and all partitions $i$. Thus, in addition to the 
threshold-based coverage level $\tau$ from 
Theorem~\ref{thm:atcg_tau}, the active sets also satisfy a 
curvature-induced coverage level of $1-c$. The effective oracle 
quality is therefore at least
\[
\tau_{\mathrm{eff}} = \max\{\tau,\,1-c\}.
\]
Repeating the proof of Theorem~\ref{thm:atcg_tau} with 
$\tau_{\mathrm{eff}}$ in place of $\tau$ gives
\[
F(\mathbf{x}(t))
\;\geq\;
\bigl(1-e^{-\tau_{\mathrm{eff}}\,t}\bigr)\,F(\mathbf{x}^\star).
\]
Substituting $\tau_{\mathrm{eff}}=\max\{\tau,\,1-c\}$ completes 
the proof.
\end{proof}

\renewcommand{\theequation}{B.\arabic{equation}}
\renewcommand{\thethm}{B.\arabic{thm}}
\renewcommand{\thelem}{B.\arabic{lem}}
\renewcommand{\thedefn}{B.\arabic{defn}}

\section*{B. Extension to General Partition Budgets     $\kappa_i\ge 1$}

Under the partition matroid with budget $\kappa_i\geq 1$, the
matroid polytope restricted to partition~$i$ is
\begin{equation}\label{eq::polytope_kappa}
\mathcal{M}_i
=
\Bigl\{
\vect{w}\in[0,1]^{|\mathcal{P}|}
\;\Big|\;
\sum_{p\in\mathcal{P}_i}[\vect{w}]_p\le\kappa_i,\;
[\vect{w}]_p=0\;\;\forall\,p\in\mathcal{P}\setminus\mathcal{P}_i
\Bigr\}.
\end{equation}
At each iteration, the partition-$i$ component of the CG
linear oracle solves
\begin{equation}\label{eq::oracle_kappa}
\vect{v}_i(t)
=
\operatorname*{arg\,max}_{\vect{w}\in\mathcal{M}_i}
\;\vect{w}^{\!\top}\nabla F(\vect{x}(t)).
\end{equation}
Since \(f\) is monotone submodular, every gradient component
satisfies \([\nabla F(\vect{x}(t))]_p\ge 0\)
for all \(p\in\mathcal{P}_i\)~\cite{calinescu2011maximizing}.
The non-negativity of all components implies that the
maximizer of~\eqref{eq::oracle_kappa} is attained by placing
unit mass on the \(\kappa_i\) elements with the largest
gradient values in \(\mathcal{P}_i\). Formally, one optimal solution is
\begin{equation}\label{eq::v_kappa}
\vect{v}_i(t)
=
\mathbf{1}_{\{p_1^\star,\ldots,p_{\kappa_i}^\star\}},
\end{equation}
where \(\{p_1^\star,\ldots,p_{\kappa_i}^\star\}\subset\mathcal{P}_i\)
consists of the \(\kappa_i\) elements with the largest entries
of \(\{[\nabla F(\vect{x}(t))]_p\}_{p\in\mathcal{P}_i}\), i.e.,
\begin{equation}\label{eq::p_star_kappa}
\{p_1^\star,\ldots,p_{\kappa_i}^\star\}
=
\operatorname{Top}_{\kappa_i}
\!\Bigl(\bigl\{[\nabla F(\vect{x}(t))]_p
\bigr\}_{p\in\mathcal{P}_i}\Bigr).
\end{equation}
Using
\[
\vect{v}_i(t)=
\mathbf{1}_{\{p_1^\star,\ldots,p_{\kappa_i}^\star\}}
\]
in the discrete update
\[
\vect{x}_i\!\left(t+\frac{1}{T}\right)
=
\vect{x}_i(t)+\frac{1}{T}\vect{v}_i(t),
\]
only the \(\kappa_i\) coordinates of \(\vect{x}_i(t)\)
corresponding to \(\{p_1^\star,\ldots,p_{\kappa_i}^\star\}\)
are modified, each increasing by \(1/T\).
When \(\kappa_i=1\), this recovers exactly the single-element
update rule of Algorithm~\ref{alg:atcg}.

\subsection*{B.1\quad Generalized Progress Ratio}

For $\kappa_i\ge 1$, the quality of the active set
$\mathcal{A}_i$ is measured by comparing the top-$\kappa_i$
oracle restricted to $\mathcal{A}_i$ with the unrestricted
top-$\kappa_i$ oracle over the full partition $\mathcal{P}_i$.
Define
\begin{align}
\mathcal{B}_i
&\in
\operatorname*{arg\,max}_{
\mathcal{B}\subseteq\mathcal{A}_i,\,
|\mathcal{B}|\le\kappa_i}
\sum\nolimits_{j\in\mathcal{B}}[\mathbf{g}]_j,
\label{eq::B_kappa}
\\[4pt]
\mathcal{C}_i
&\in
\operatorname*{arg\,max}_{
\mathcal{C}\subseteq\mathcal{P}_i,\,
|\mathcal{C}|\le\kappa_i}
\sum\nolimits_{j\in\mathcal{C}}[\mathbf{g}]_j,
\label{eq::C_kappa}
\end{align}
where $[\mathbf{g}]_j\approx[\nabla F(\mathbf{x})]_j$ are
the Monte Carlo gradient estimates.
Since both maximizations are over non-negative quantities,
$\mathcal{B}_i$ and $\mathcal{C}_i$ are simply the top-$\kappa_i$
elements of $[\mathbf{g}]$ within $\mathcal{A}_i$ and
$\mathcal{P}_i$, respectively.
The \emph{generalized progress ratio} is then
\begin{equation}\label{eq::eta_kappa}
\eta_i^{(\kappa)}
=
\frac{
\displaystyle\sum\nolimits_{j\in\mathcal{B}_i}[\mathbf{g}]_j
}{
\displaystyle\sum\nolimits_{j\in\mathcal{C}_i}[\mathbf{g}]_j
+10^{-12}
}.
\end{equation}
The condition $\eta_i^{(\kappa)}\ge\tau$ means that the
active-set top-$\kappa_i$ oracle captures at least a
$\tau$-fraction of the full-partition top-$\kappa_i$ oracle
value, generalizing the $\kappa_i=1$ progress ratio
in~\eqref{eq:eta_def}.

\emph{Active-Set Expansion Rule}: For $\kappa_i=1$, a single element suffices to restore the
coverage condition. For $\kappa_i>1$, one element may not
be sufficient. The expansion step therefore becomes an
\emph{inner loop}: while $\eta_i^{(\kappa)}<\tau$ and
$\mathcal{A}_i\ne\mathcal{P}_i$, the agent repeatedly adds
the inactive element with the largest gradient:
\begin{equation}\label{eq::expansion_kappa}
j_i^\star\in
\operatorname*{arg\,max}_{j\in\mathcal{P}_i\setminus\mathcal{A}_i}
[\mathbf{g}]_j,
\qquad
\mathcal{A}_i\leftarrow\mathcal{A}_i\cup\{j_i^\star\}.
\end{equation}
After each addition, $\eta_i^{(\kappa)}$ is recomputed
from the updated $\mathcal{A}_i$. The inner loop terminates
as soon as $\eta_i^{(\kappa)}\ge\tau$ or
$\mathcal{A}_i=\mathcal{P}_i$.
When $\kappa_i=1$, the inner loop executes at most once,
recovering Algorithm~\ref{alg:atcg} exactly.
Algorithm~\ref{alg:atcg:kappa} summarizes the resulting
server-assisted implementation for general $\kappa_i$.

\begin{algorithm}[t]
\caption{\textit{ATCG} for General $\kappa_i\ge 1$}
\label{alg:atcg:kappa}
\small
\begin{algorithmic}[1]
\Require Ground set $\mathcal{P}=\bigcup_{i=1}^N\mathcal{P}_i$;
         budgets $\{\kappa_i\}$; threshold $\tau>0$;
         horizon $T$; MC samples $K$
\State \textbf{[Server]} $\mathbf{x}\leftarrow\mathbf{0}$;
       $\mathcal{E}\leftarrow\emptyset$
\State \textbf{[Agent $i$, $\forall i$]}
       $\mathbf{x}_i\leftarrow\mathbf{0}$;
       $\mathcal{A}_i\leftarrow\emptyset$
\For{$t=0,1,\ldots,T-1$}
  \State \textbf{[Server$\to$Agent $i$]} Broadcast
         $\mathcal{E}$ and $\mathbf{x}$
  \For{$i=1,\ldots,N$} \Comment{parallel}
    \State Estimate $[\mathbf{g}]_j\approx
           [\nabla F(\mathbf{x})]_j$, $\forall j\in\mathcal{P}_i$,
           using $K$ MC samples over $\mathcal{E}$
    \State Compute $\eta_i^{(\kappa)}$ via
           \eqref{eq::eta_kappa}
    \While{$\eta_i^{(\kappa)}<\tau$ \textbf{and}
           $\mathcal{A}_i\ne\mathcal{P}_i$}
           \Comment{inner expansion loop}
      \State $j_i^\star\leftarrow
             \arg\max_{j\in\mathcal{P}_i\setminus\mathcal{A}_i}
             [\mathbf{g}]_j$;
             $\mathcal{A}_i\leftarrow\mathcal{A}_i\cup\{j_i^\star\}$
      \State Transmit embedding of $j_i^\star$ to server;
             $\mathcal{E}\leftarrow\mathcal{E}\cup\{j_i^\star\}$
      \State Recompute $\eta_i^{(\kappa)}$
             via~\eqref{eq::eta_kappa}
    \EndWhile
    \State $\{p_1^\star,\ldots,p_{\kappa_i}^\star\}\leftarrow
           \operatorname{Top}_{\kappa_i}
           (\{[\mathbf{g}]_j\}_{j\in\mathcal{A}_i})$
    \State $[\mathbf{x}_i]_{p_k^\star}\leftarrow
           [\mathbf{x}_i]_{p_k^\star}+\tfrac{1}{T}$,
           $k=1,\ldots,\kappa_i$
  \EndFor
  \State \textbf{[Agent $i\to$Server]} Transmit
         $\mathbf{x}_i(t+\tfrac{1}{T})$, $\forall i$
  \State \textbf{[Server]} Assemble
         $\mathbf{x}(t+\tfrac{1}{T})=
         [\mathbf{x}_1^\top,\ldots,\mathbf{x}_N^\top]^\top$
\EndFor
\State \textbf{[Server]} Round $\mathbf{x}(1)$ to feasible
       $S\in\mathcal{I}$
\State \Return $S$
\end{algorithmic}
\end{algorithm}

\emph{Performance Guarantee}: The $\tau$-approximate oracle condition generalizes directly.
If $\eta_i^{(\kappa)}(t)\ge\tau$ for every partition $i$ and
every $t\in[0,1]$, then
\begin{equation}\label{eq::tau_coverage_kappa}
\sum_{i=1}^N\sum_{j\in\mathcal{B}_i(t)}
[\nabla F(\mathbf{x}(t))]_j
\;\ge\;
\tau
\sum_{i=1}^N\sum_{j\in\mathcal{C}_i(t)}
[\nabla F(\mathbf{x}(t))]_j,
\end{equation}
which is equivalently expressed as the inner-product
condition
\begin{equation}
\mathbf{v}_{\mathrm{ATCG}}(t)^\top
\nabla F(\mathbf{x}(t))
\;\ge\;
\tau\,\mathbf{v}_{\mathrm{CG}}(t)^\top
\nabla F(\mathbf{x}(t)).
\end{equation}
This is precisely the $\tau$-approximate oracle condition
used in the proof of Theorem~\ref{cor:atcg_curvature},
so the same continuous-greedy analysis yields
\[
F(\mathbf{x}(t))
\;\ge\;
\bigl(1-e^{-\tau t}\bigr)\,F(\mathbf{x}^\star),
\qquad\forall\,t\in[0,1],
\]
and, under the curvature-aware condition, the effective rate
improves to $\max\{\tau,1-c\}$ as established in
Theorem~\ref{cor:atcg_curvature}.

\emph{Communication Cost}

The communication cost is still governed by the cumulative
active-set size $C_{\mathrm{ATCG}}(T)=\sum_{i=1}^N|\mathcal{A}_i(T)|$.
The minimum possible communication is now
$\sum_{i=1}^N\kappa_i$ rather than $N$: at least $\kappa_i$
elements must be activated in partition~$i$ before the
top-$\kappa_i$ oracle can be evaluated.
Under the low-curvature condition $\tau\le 1-\max_i c_i$,
the inner expansion loop terminates after activating the
$\kappa_i$ best singleton elements in each partition,
so $|\mathcal{A}_i(T)|=\kappa_i$ for all $T\ge 1$, giving
\[
C_{\mathrm{ATCG}}(T)
=
\sum_{i=1}^N\kappa_i
\;\ll\;|\mathcal{P}|.
\]
The $\kappa_i=1$ case recovers $C_{\mathrm{ATCG}}(T)=N$
from Lemma~\ref{lem:comm}.

\end{document}